\newtheorem{theorem}{Theorem}
\newtheorem{lemma}{Lemma}
\newtheorem{assumption}{Assumption}
\newcommand{\reals}{\mathbb{R}}
\newcommand{\E}{\mathbb{E}}
\newcommand{\sign}{\mathrm{sign}}
\newcommand{\bx}{\mathbf{x}}
\newcommand{\bw}{\mathbf{w}}
\newcommand{\bu}{\mathbf{u}}
\newcommand{\bv}{\mathbf{v}}
\newcommand{\Ocal}{\mathcal{O}}
\newcommand{\Dcal}{\mathcal{D}}
\newcommand{\norm}[1]{\|#1\|}
\newcommand{\secref}[1]{Sec.~\ref{#1}}
\renewcommand{\eqref}[1]{Eq.~(\ref{#1})}
\newcommand{\lemref}[1]{Lemma~\ref{#1}}
\newcommand{\thmref}[1]{Thm.~\ref{#1}}
\title{Gradient Methods Never Overfit On Separable Data}
\author{Ohad Shamir\\Weizmann Institute of Science}
\date{}
\begin{document}

\maketitle

\begin{abstract}
	A line of recent works established that when training linear predictors over separable data, using gradient methods and exponentially-tailed losses, the predictors asymptotically converge in direction to the max-margin predictor. As a consequence, the predictors asymptotically do not overfit. However, this does not address the question of whether overfitting might occur non-asymptotically, after some bounded number of iterations. In this paper, we formally show that standard gradient methods (in particular, gradient flow, gradient descent and stochastic gradient descent) \emph{never} overfit on separable data: If we run these methods for $T$ iterations on a dataset of size $m$, both the empirical risk and the generalization error decrease at an essentially optimal rate of $\tilde{\Ocal}(1/\gamma^2 T)$ up till $T\approx m$, at which point the generalization error remains fixed at an essentially optimal level of $\tilde{\Ocal}(1/\gamma^2 m)$ regardless of how large $T$ is. Along the way, we present non-asymptotic bounds on the number of margin violations over the dataset, and prove their tightness. 
\end{abstract}

\section{Introduction}

Motivated by empirical observations in the context of neural networks, there is considerable interest nowadays in studying the \emph{implicit bias} of learning algorithms. This refers to the fact that even without any explicit regularization or other techniques to avoid overfitting, the dynamics of the learning algorithm itself biases its output towards ``simple'' predictors that generalize well. 

In this paper, we consider the implicit bias in a well-known and simple setting, namely learning linear predictors ($\bx\mapsto \bx^\top \bw$) for binary classification with respect to linearly-separable data. In a recent line of works \citep{soudry2018implicit,ji2018risk,nacson2019convergence,ji2019refined,Dudik2020gradient}, it was shown that if we attempt to do this by minimizing the empirical risk (average loss) over a dataset, using gradient descent and any exponentially-tailed loss (such as the logistic loss), then the predictor asymptotically converges in direction to the max-margin predictor with respect to the Euclidean norm\footnote{Namely, $\arg\max_{\bw:\norm{\bw}_2=1}\min_i \bx_i^\top \bw$ for a given dataset $\bx_1,\bx_2,\ldots,\bx_m$.}. Since there are standard generalization bounds for predictors which achieve a large margin over the dataset, we get that asymptotically, gradient descent does not overfit, even if we just run it on the empirical risk function without any explicit regularization, and even if the number of iterations $T$ diverges to infinity. In follow-up works, similar results were also obtained for other gradient methods such as stochastic gradient descent and mirror descent \citep{nacson2019stochastic,gunasekar2018characterizing}, and for more complicated predictors such as linear networks, shallow ReLU networks, and linear convolutional networks \citep{ji2018gradient,ji2019polylogarithmic,gunasekar2018implicit}.

However, in practice the number of iterations $T$ is some bounded finite number. Thus, the asymptotic results above leave open the possibility that for a wide range of values of $T$, gradient methods do not achieve a good margin, and possibly overfit. Admittedly, many of these papers do provide finite-time guarantees for linear predictors, which all tend to have the following form: After $T$ iterations, the output of gradient descent $\bar{\bw}_T$ (normalized to have unit norm) satisfies
\begin{equation}\label{eq:jibound}
\norm{\bar{\bw}_T-\bw^*}~\leq~ \Ocal\left(\frac{1}{\log(T)}\right),
\end{equation}
where $\bw^*$ is the max-margin predictor, and the $\Ocal(\cdot)$ notation hides dependencies in the dataset size and the margin attained by $\bw^*$. However, such bounds do not satisfactorily address the problem above, since they decay extremely slowly with $T$. For example, suppose we ask how many iterations $T$ are needed, till we get a predictor which achieves some positive margin on all the data points, assuming there exists a unit-norm predictor $\bw^*$ achieving a margin of $\gamma$ (namely $\min_i \bx_i^\top \bw^*\geq \gamma$). If all we know is the bound in \eqref{eq:jibound}, we must require $\norm{\bar{\bw}_T-\bw^*}\leq \gamma$, which holds if $T>\exp(\Omega(1/\gamma)$ (and in fact, the actual required bound is much larger due to the hidden dependencies in the $\Ocal(\cdot)$ notation). For realistically small values of $\gamma$, this bound on $T$ is unacceptably large. Could it be that gradient methods do not overfit only after so many iterations? We note that in \citet{ji2019refined}, it is shown that \eqref{eq:jibound} is essentially tight, but this does not preclude the possibility that $\bar{\bw}_T$ does not overfit even before getting very close to $\bw^*$. 

In this paper, we show that this is indeed the case, and in fact, for \emph{any} number of iterations $T$, gradient methods do not overfit and essentially behave in the best manner we can hope for. Specifically, if the underlying data distribution is separable with margin $\gamma$, and we attempt to minimize the average of an exponential or logistic loss over a training set of size $m$, using standard gradient methods (gradient flow, gradient descent, or stochastic gradient descent), then the generalization error of the resulting predictor (with respect to the $0-1$ loss) is at most $\tilde{\Ocal}(1/\gamma^2T + 1/\gamma^2 m)$, up to constants and logarithmic factors. For $T\leq m$, this bound is $\tilde{\Ocal}(1/\gamma^2 T)$, which is essentially the same as the optimal upper bound on the empirical risk of the algorithm's output after $T$ iterations. In other words, both the generalization error and the empirical risk provably go down at the same (essentially optimal) rate. Once $T\geq m$, the empirical risk may further decrease to $0$, but the generalization error remains at $\tilde{\Ocal}(1/\gamma^2 m)$, which is well-known to be essentially optimal for any learning algorithm in this setting.

To prove these results, we also establish more refined, nonasymptotic bounds on the margins attained on the dataset, which are also applicable to other losses. In general, these bounds imply that for any  $\alpha\in [0,1]$, after $T$ iterations, the resulting predictor achieves a margin of $\Omega((1-\alpha)\gamma)$ on all but $\tilde{\Ocal}\left(\frac{1}{(\gamma^2 T)^\alpha}\right)$ of the data points. These bounds guarantee that as $T$ increases, for larger and larger portions of the dataset, the predictors achieve a margin of $\Omega(\gamma)$, implying good generalization properties. Furthermore, we prove a lower bound showing that such guarantees are essentially optimal. Finally, although we focus mostly on the exponential and logistic loss, we also discuss the applicability of our results to polynomially-tailed losses in the appendix.

Before continuing, we emphasize that the techniques we use in our upper bounds are not fundamentally new, and similar ideas were employed in previous analyses on the convergence to the max-margin predictor, such as in \citet{ji2018risk,ji2019polylogarithmic} (in fact, some of our results build on these analyses). However, we apply these techniques to a conceptually different question, about the non-asymptotic ability of gradient methods to attain some significant margin. In addition, since we only care about convergence to some large-margin predictor (as opposed to the max-margin predictor), our analysis can  be shorter and simpler. 

Finally, we note that polynomial-time, non-asymptotic guarantees on the generalization error of unregularized gradient methods were also obtained in \citet{ji2019polylogarithmic} and version 2 of \citet{ji2018risk}. However, the former is for nonlinear predictors, and the latter is for one-pass stochastic gradient descent, which is different than the algorithms considered here and where necessarily $T=m$. Moreover, the bounds in both papers have a worse polynomial dependence on the margin $\gamma$, compared to our results. 

The paper is structured as follows. In the next section, we define some useful notation, and formally describe our setting. In \secref{sec:flow}, we provide our positive results about the margin behavior and the generalization error, focusing on gradient flow (for which our analysis is the simplest and completely self-contained). In \secref{sec:gdsgd}, we show how similar results can be obtained for gradient descent and stochastic gradient descent. In \secref{sec:tightness}, focusing for concreteness on gradient descent, we show that our positive result on the margin behavior is essentially tight. In Appendix \ref{app:poly}, we briefly discuss how our results can be applied to polynomially-tailed losses, and their implications. Some technical proofs are provided in Appendix \ref{app:proofs}.

\section{Preliminaries}\label{sec:preliminaries}

We generally let boldfaced letters denote vectors. Given a positive integer $n$, we let $[n]$ be a shorthand for $\{1,\ldots,n\}$. Given a nonzero vector $\bw$, we let 
\[
\bar{\bw}:=\bw/\norm{\bw}
\]
denote its normalization to unit norm. We use the standard $\Ocal(\cdot)$ and $\Omega(\cdot)$ notation to hide constants, and $\tilde{\Ocal}(\cdot)$, $\tilde{\Omega}(\cdot)$ to hide constants and factors polylogarithmic in the problem parameters. $\log(\cdot)$ refers to the natural logarithm, and $\norm{\cdot}$ refers to the Euclidean norm. 

We consider datasets defined by a set of vectors $\bx_1,\ldots,\bx_m\in \reals^d$, and algorithms which attempt to minimize the empirical risk function, namely 
\[
\hat{L}(\bw):=\frac{1}{m}\sum_{i=1}^{m}\ell(\bx_i^\top\bw)
\]
where $\ell:\reals\mapsto\reals$ is some loss function\footnote{In the context of binary classification, it is customary to consider labeled data points $(\bx_1,y_1),\ldots,(\bx_m,y_m)$ and losses of the form $\bw\mapsto \ell(y_i \bx_i^\top \bw)$. However, for our purposes we can fold the binary label $y_i$ inside $\bx_i$, and treat this as a single vector.}. We will utilize the following two assumptions about the dataset and the loss:
\begin{assumption}\label{assump:separable}
	$\max_i \norm{\bx_i}\leq 1$, and the dataset is separable with margin $\gamma\in (0,1]$: Namely, there exists a unit vector $\bw$ s.t. $\min_i \bx_i^\top \bw \geq \gamma$. 
\end{assumption}

\begin{assumption}\label{assump:loss}
	$\ell$ is convex, monotonically decreasing, and has an inverse function\footnote{Namely, for any $z\in (0,\infty)$, there is a unique $p=\ell^{-1}(z)$ such that $\ell(p)=z$.} $\ell^{-1}$ on the interval $(0,\ell(0)]$.	
\end{assumption}

We note that assumption \ref{assump:separable} is without much loss of generality (it simply sets the scaling of the problem). Assumption \ref{assump:loss} implies that $\hat{L}$ is convex, and is satisfied for most classification losses. When instantiating our general results, we will focus for concreteness on the logistic loss $\ell(z)=\log(1+\exp(-z))$ and the exponential loss $\ell(z)=\exp(-z)$. However, our results can be applied to other losses as well.

In our proofs, we will make use of the following well-known facts (see for example \citet{nesterov2018lectures}): For a convex function $f$ on $\reals^d$, it holds for any vectors $\bu,\bv$ that $f(\bv)-f(\bu)\leq \nabla f(\bv)(\bv-\bu)$. Also, if $f$ is a function with $\mu$-Lipschitz gradients, then for any $\bu,\bv$, $f(\bv)\leq f(\bu)+\nabla f(\bu)(\bv-\bu)+\frac{\mu}{2}\norm{\bv-\bu}^2$. 

\section{Gradient Flow}\label{sec:flow}

In this section, we present positive results on the margin behavior and generalization error of gradient flow. Gradient flow is the standard continuous-time analogue of gradient descent. Although it cannot be implemented precisely in practice, it is a useful idealization of gradient descent, and the method in which it is easiest to present our analysis (the analysis for gradient descent in the next section is just a slight variation). Gradient flow produces a continuous trajectory of vectors $\bw(t)$, indexed by a time $t\geq 0$. It is defined by a starting point $\bw(0)$ (which will be the origin $\mathbf{0}$ in our case), and the differential equation 
\[
\frac{\partial}{\partial t}\bw(t)=-\nabla \hat{L}(\bw(t))~.
\]

We now present a general result about the margin behavior of gradient flow, which is applicable to general losses and any vector reached by gradient flow at any time point:
\begin{theorem}\label{thm:key}
	Under assumptions \ref{assump:separable} and \ref{assump:loss}, let $\bw\neq \mathbf{0}$ be some point reached by gradient flow, such that $\hat{L}(\bw)=\epsilon$ for some $\epsilon\in (0,\ell(0)]$. Then for any $p\in \left[\frac{\epsilon}{\ell(0)},1\right]$, for at least $(1-p)m$ of the indices $i\in [m]$, 
	\[
	\bx_i^\top \bar{\bw} ~>~ \frac{\gamma}{2}\cdot \frac{\ell^{-1}(\epsilon/p)}{\ell^{-1}(\epsilon)}~.
	\]
\end{theorem}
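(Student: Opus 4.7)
The plan is to combine a Markov-type argument on the per-example losses with a convexity-based norm bound that genuinely uses the gradient flow trajectory. Set $a := \ell^{-1}(\epsilon)$ and $b := \ell^{-1}(\epsilon/p)$; the lower bound $p \geq \epsilon/\ell(0)$ ensures $\epsilon/p \leq \ell(0)$ so $b$ is well-defined, and since $\ell^{-1}$ is decreasing we have $0 \leq b \leq a$.

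The Markov step is one line: the quantities $\ell(\bx_i^\top \bw) \geq 0$ average to $\epsilon$, so at most $pm$ of them can be $\geq \epsilon/p$. On the remaining $\geq (1-p)m$ indices we have $\ell(\bx_i^\top \bw) < \epsilon/p$; since $\ell$ is strictly decreasing (by invertibility), $\bx_i^\top \bw > b$ and hence $\bx_i^\top \bar{\bw} > b/\|\bw\|$. What remains is to show $\|\bw\| \leq 2a/\gamma$.

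For this norm bound, let $\bw^*$ denote the unit margin-$\gamma$ vector from Assumption~\ref{assump:separable} and set $r := a/\gamma$. Because $\bx_i^\top \bw^* \geq \gamma$ for every $i$ and $\ell$ is decreasing, $\hat{L}(r\bw^*) \leq \ell(r\gamma) = \ell(a) = \epsilon$. Let $t$ be the (uniquely determined) time at which $\bw(t) = \bw$. Applying convexity of $\hat{L}$ at $\bw(s)$ against the comparator $r\bw^*$, substituting the flow identity $\nabla \hat{L}(\bw(s)) = -\dot{\bw}(s)$, and recognising $\dot{\bw}^\top \bw = \tfrac{1}{2}\tfrac{d}{ds}\|\bw\|^2$ and $\dot{\bw}^\top \bw^* = \tfrac{d}{ds}(\bw^\top \bw^*)$ yields
\[
\hat{L}(\bw(s)) - \hat{L}(r\bw^*) \;\leq\; -\tfrac{1}{2}\tfrac{d}{ds}\|\bw(s)\|^2 \,+\, r\,\tfrac{d}{ds}\bigl(\bw(s)^\top \bw^*\bigr).
\]
Integrating from $s = 0$ (where $\bw(0) = \mathbf{0}$) to $s = t$, the right side telescopes to $-\tfrac{1}{2}\|\bw\|^2 + r\,\bw^\top \bw^*$. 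The left-hand integrand is non-negative, because the loss is non-increasing along gradient flow (via $\tfrac{d}{ds}\hat{L}(\bw(s)) = -\|\nabla \hat{L}(\bw(s))\|^2 \leq 0$), so $\hat{L}(\bw(s)) \geq \hat{L}(\bw(t)) = \epsilon \geq \hat{L}(r\bw^*)$ for all $s \in [0,t]$. Thus $\|\bw\|^2 \leq 2r\,\bw^\top \bw^*$, and Cauchy--Schwarz with $\|\bw^*\| = 1$ gives $\|\bw\|^2 \leq 2r\|\bw\|$, i.e.\ $\|\bw\| \leq 2r = 2a/\gamma$.

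Combining the two steps, on at least $(1-p)m$ indices $\bx_i^\top \bar{\bw} > b/\|\bw\| \geq \gamma b/(2a) = (\gamma/2)\,\ell^{-1}(\epsilon/p)/\ell^{-1}(\epsilon)$, as claimed. I expect the main obstacle to be the norm bound: bounding $\|\bw\|$ purely from the single value $\hat{L}(\bw) = \epsilon$ is impossible in general (sublevel sets of $\hat{L}$ can be unbounded), so the argument must genuinely use that $\bw$ lies on the gradient flow trajectory. The calculation above achieves this by pitting the trajectory against the fixed ``target'' $r\bw^*$---whose own loss is at most $\epsilon$---and exploiting the monotone decrease of $\hat{L}$ along the flow to make the integrated convexity inequality go the right way.
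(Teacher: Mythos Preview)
Your proof is correct and follows essentially the same approach as the paper: a Markov argument on the per-example losses combined with a norm bound obtained by comparing the trajectory to the point $r\bw^* = \frac{\ell^{-1}(\epsilon)}{\gamma}\bw^*$, whose loss is at most $\epsilon$. The only cosmetic difference is in how the norm bound is packaged: the paper isolates it as a separate lemma, showing directly that $\frac{d}{ds}\|\bw(s)-r\bw^*\|^2 \leq 0$ (so $\|\bw(s)-r\bw^*\| \leq \|r\bw^*\|$, and the triangle inequality finishes), whereas you integrate the convexity inequality and apply Cauchy--Schwarz at the end. Unwinding your inequality $\|\bw\|^2 \leq 2r\,\bw^\top\bw^*$ gives exactly $\|\bw - r\bw^*\|^2 \leq \|r\bw^*\|^2$, so the two computations are identical up to rearrangement.
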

Intuitively, we expect the empirical risk $\hat{L}(\bw(t))$ to decay with $t$ (as we instantiate for specific losses later on). Computing the corresponding bounds on $\epsilon$ and plugging into the above, we can get guarantees on how many points in our dataset achieve a certain margin. Note that since $\ell^{-1}$ is monotonically decreasing, the margin lower bound in the theorem is always at most $\gamma/2$. Since under assumption \ref{assump:separable} the maximal margin is at least $\gamma$, this result cannot be used to recover the asymptotic convergence to the max-margin predictor, shown by previous results. However, as discussed in the introduction, this is not our focus here: We ask about the time to converge to some large-margin predictor which generalizes well, not necessarily the max-margin predictor. For that purpose, as we will see later, a margin lower bound of $\Omega(\gamma)$ is perfectly adequate.

To prove \thmref{thm:key}, we will need the following key lemma, which bounds the norm of the points along the trajectory of gradient flow, in terms of the value of $\hat{L}$. The proof is short and relies only on the convexity of $\hat{L}$:
\begin{lemma}\label{lem:key}
	Fix some $T\geq 0$, and let $\bw^*$ be any vector such that $\hat{L}(\bw^*)\leq \hat{L}(\bw(T))$. Then $\sup_{t\in [0,T]}\norm{\bw(t)}\leq 2\norm{\bw^*}$.
\end{lemma}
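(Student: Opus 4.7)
The plan is to control the squared distance $\norm{\bw(t) - \bw^*}^2$ along the flow and show that it is nonincreasing on $[0,T]$, which by the triangle inequality (and $\bw(0) = \mathbf{0}$) will give $\norm{\bw(t)} \leq 2\norm{\bw^*}$.

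First, I would differentiate $\norm{\bw(t) - \bw^*}^2$ with respect to $t$. Using the gradient flow equation $\dot{\bw}(t) = -\nabla \hat{L}(\bw(t))$, this derivative equals $-2\langle \bw(t) - \bw^*, \nabla \hat{L}(\bw(t))\rangle$. By the convexity of $\hat{L}$ (guaranteed by Assumption~\ref{assump:loss}), we have $\langle \bw(t) - \bw^*, \nabla \hat{L}(\bw(t))\rangle \geq \hat{L}(\bw(t)) - \hat{L}(\bw^*)$, so the derivative is at most $-2(\hat{L}(\bw(t)) - \hat{L}(\bw^*))$.

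Next, I would verify that $\hat{L}(\bw(t))$ is monotonically nonincreasing in $t$, since $\frac{d}{dt}\hat{L}(\bw(t)) = -\norm{\nabla \hat{L}(\bw(t))}^2 \leq 0$. Combined with the hypothesis $\hat{L}(\bw^*) \leq \hat{L}(\bw(T))$, this gives $\hat{L}(\bw(t)) \geq \hat{L}(\bw(T)) \geq \hat{L}(\bw^*)$ for all $t \in [0,T]$. Therefore the derivative computed above is nonpositive on $[0,T]$, so $\norm{\bw(t) - \bw^*}^2$ is nonincreasing on this interval.

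Finally, since $\bw(0) = \mathbf{0}$, the monotonicity yields $\norm{\bw(t) - \bw^*} \leq \norm{\bw(0) - \bw^*} = \norm{\bw^*}$ for every $t \in [0,T]$, and the triangle inequality $\norm{\bw(t)} \leq \norm{\bw(t) - \bw^*} + \norm{\bw^*} \leq 2\norm{\bw^*}$ closes the argument. I do not anticipate a genuine obstacle here: the only subtlety is combining the convexity inequality with the monotonicity of $\hat{L}$ along the flow to ensure that the comparator $\bw^*$ has a strictly smaller loss than every iterate in $[0,T]$; once that is noted, the classical "distance to optimum is nonincreasing" argument for convex gradient descent applies essentially verbatim.
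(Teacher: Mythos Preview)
Your proposal is correct and follows essentially the same approach as the paper's own proof: both differentiate $\norm{\bw(t)-\bw^*}^2$, use convexity of $\hat{L}$ together with the monotonicity of $\hat{L}(\bw(t))$ along the flow to conclude the distance is nonincreasing, and finish with the triangle inequality from $\bw(0)=\mathbf{0}$. The only difference is the order in which the two ingredients are introduced.
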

\begin{proof}
	By definition of gradient flow and the chain rule, we have $\frac{\partial}{\partial t}\hat{L}(\bw(t))=-\norm{\nabla \hat{L}(\bw(t))}^2\leq 0$, so $\hat{L}(\bw(t))$ is monotonically decreasing in $t$. As a result, for any $t$, $\hat{L}(\bw(t))\geq \hat{L}(\bw(T))\geq \hat{L}(\bw^*)$. By convexity of $\hat{L}$, is follows that
	\[
	\nabla \hat{L}(\bw(t))^\top(\bw(t)-\bw^*)~\geq~ \hat{L}(\bw(t))-\hat{L}(\bw^*)\geq 0~.
	\]
	Using this inequality, the definition of gradient flow and the chain rule, it follows that
	\[
	\frac{\partial}{\partial t}\norm{\bw(t)-\bw^*}^2~=~-2\nabla\hat{L}(\bw(t))^\top(\bw(t)-\bw^*)\leq 0~.
	\]
	Therefore, $\norm{\bw(t)-\bw^*}$ is monotonically decreasing in $t\in [0,T]$, so it is at most $\norm{\bw(0)-\bw^*}=\norm{\bw^*}$. Thus, by the triangle inequality, $\norm{\bw(t)}\leq \norm{\bw(t)-\bw^*}+\norm{\bw^*}\leq 2\norm{\bw^*}$. 
\end{proof}

\begin{proof}[Proof of \thmref{thm:key}]
Let $\bw_0,\norm{\bw_0}=1$ be a max-margin separator, so that $\min_i \bx_i^\top \bw_0\geq \gamma$. Define
$\bw^*:=\frac{\ell^{-1}(\epsilon)}{\gamma}\bw_0$, 
which has norm $\ell^{-1}(\epsilon)/\gamma$, and note that since $\ell^{-1}(\epsilon)$ is non-negative,
\[
\max_i \ell(\bx_i^\top \bw^*)~=~\ell\left(\min_i \frac{\ell^{-1}(\epsilon)}{\gamma}\bx_i^\top \bw_0\right)~\leq~ \ell(\ell^{-1}(\epsilon)\cdot 1)~=~ \epsilon~.
\]
This implies $\hat{L}(\bw^*)=\frac{1}{m}\sum_{i=1}^{m}\ell(\bx_i^\top\bw^*)\leq \epsilon$. Combined with \lemref{lem:key}, we get that
	\begin{equation}\label{eq:normbound}
	\norm{\bw}~\leq~2\norm{\bw^*}~=~\frac{2\ell^{-1}(\epsilon)}{\gamma}~.
	\end{equation}
Since $\hat{L}(\bw)=\epsilon$, we get that $\E_i[\ell(\bx_i^\top\bw)]\leq \epsilon$, where $i$ is uniformly distributed on $[m]$. By Markov's inequality and \eqref{eq:normbound}, it follows that
\[
p\geq \Pr_{i}\left(\ell(\bx_i^\top\bw) \geq \frac{\epsilon}{p}\right)
= \Pr_{i}\left(\bx_i^\top\bw \leq \ell^{-1}(\epsilon/p)\right)
= \Pr_{i}\left(\bx_i^\top\bar{\bw} \leq \frac{\ell^{-1}(\epsilon/p)}{\norm{\bw}}\right)
\geq
\Pr_{i}\left(\bx_i^\top\bar{\bw} \leq \frac{\gamma\ell^{-1}(\epsilon/p)}{2 \ell^{-1}(\epsilon)}\right)~.
\]
\end{proof}

For concreteness, let us now apply \thmref{thm:key} to the case of the exponential loss, $\ell(z)=\exp(-z)$. In order to get an interesting guarantee, we will utilize the following simple non-asymptotic guarantee on the decay of $\hat{L}(\bw(T))$ as a function of $T$:

\begin{lemma}\label{lem:Ldecayexp}
	Under assumption \ref{assump:separable}, if $\ell$ is the exponential loss, then $\hat{L}(\bw(T))\leq \frac{1}{\gamma^2 T}$ for any $T> 0$.
\end{lemma}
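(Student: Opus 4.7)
The plan is to exploit the defining identity $\ell'(z) = -\ell(z)$ of the exponential loss to obtain a Polyak–{\L}ojasiewicz-type inequality for $\hat{L}$, and then integrate the resulting ODE for $t\mapsto \hat{L}(\bw(t))$.

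First I would note that, from the chain rule and the definition of gradient flow,
\[
\frac{d}{dt}\hat{L}(\bw(t)) \;=\; -\bigl\|\nabla \hat{L}(\bw(t))\bigr\|^2.
\]
So the whole task reduces to lower bounding $\|\nabla \hat{L}(\bw)\|^2$ by a function of $\hat{L}(\bw)$. To this end, let $\bw_0$ be a unit-norm margin-$\gamma$ separator from Assumption~\ref{assump:separable}. For the exponential loss, $\nabla \hat{L}(\bw) = -\frac{1}{m}\sum_i \ell(\bx_i^\top\bw)\,\bx_i$, so inner-producting with $\bw_0$ and using $\bx_i^\top\bw_0 \geq \gamma$ together with $\ell(\bx_i^\top\bw)>0$ gives
\[
-\nabla \hat{L}(\bw)^\top \bw_0 \;=\; \frac{1}{m}\sum_{i=1}^m \ell(\bx_i^\top\bw)\,\bx_i^\top\bw_0 \;\geq\; \gamma\,\hat{L}(\bw).
\]
Since $\|\bw_0\|=1$, Cauchy–Schwarz yields $\|\nabla \hat{L}(\bw)\| \geq \gamma\,\hat{L}(\bw)$.

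Combining the two displays, the scalar function $u(t) := \hat{L}(\bw(t))$ satisfies the differential inequality $u'(t) \leq -\gamma^2\, u(t)^2$. Dividing by $u(t)^2$ (which is positive since $\ell>0$) gives $\frac{d}{dt}\!\left(\tfrac{1}{u(t)}\right) \geq \gamma^2$, and integrating from $0$ to $T$ yields $\frac{1}{u(T)} \geq \frac{1}{u(0)} + \gamma^2 T \geq \gamma^2 T$. Inverting produces the claimed bound $\hat{L}(\bw(T)) \leq \frac{1}{\gamma^2 T}$.

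I do not anticipate any real obstacle: every step is a one-liner once the key observation $\ell'=-\ell$ is in play. The only mild subtlety is making sure that $u(t)$ stays strictly positive so that dividing by $u(t)^2$ is legitimate, but this is immediate because the exponential loss is everywhere positive and $\hat{L}$ is a finite average of its values. Note also that $\hat{L}(\bw(0))=\hat{L}(\mathbf{0})=1$, so in fact the slightly sharper bound $\hat{L}(\bw(T)) \leq \frac{1}{1+\gamma^2 T}$ drops out; the stated bound is the cleaner relaxation used in later sections.
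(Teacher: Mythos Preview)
Your proposal is correct and follows essentially the same route as the paper: both use the separator $\bw_0$ and Cauchy--Schwarz together with $\ell'=-\ell$ to derive $\frac{d}{dt}\hat{L}(\bw(t))\leq -\gamma^2\hat{L}(\bw(t))^2$, and then conclude from this differential inequality. The only cosmetic difference is in the last step: you integrate $\frac{d}{dt}(1/u)\geq\gamma^2$ directly (also picking up the sharper $\frac{1}{1+\gamma^2 T}$), whereas the paper argues via the auxiliary function $f(t)=\gamma^2 t\,\hat{L}(\bw(t))$ that $f(t)\leq 1$.
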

\begin{proof}
	Let $\bw_0$ be a max-margin unit vector, so that $\min_i \bx_i^\top \bw_0\geq \gamma$. By definition of gradient flow, the chain rule and Cauchy-Schwartz,
	\begin{align*}
	\frac{\partial}{\partial t} \hat{L}(\bw(t))~&=~ -\norm{\nabla \hat{L}(\bw(t))}^2
	~\leq~ -\left(\nabla \hat{L}(\bw(t))^\top\bw_0\right)^2
	~=~
	-\left(\frac{1}{m}\sum_{i=1}^{m}\ell'(\bx_i^\top \bw)\bx_i^\top \bw_0\right)^2\\
	&\leq~ -\left(\frac{1}{m}\sum_{i=1}^{m}\ell'(\bx_i^\top \bw)\gamma\right)^2
	~\stackrel{(*)}{=}~ -\left(\frac{1}{m}\sum_{i=1}^{m}\ell(\bx_i^\top \bw)\gamma\right)^2~=~ -\gamma^2 \hat{L}(\bw(t))^2~,
	\end{align*}
	where in $(*)$ we used the fact that when $\ell(z)=\ell'(z)=\exp(z)$ for all $z$. Consider now the function $f(t):=\gamma^2 t\cdot \hat{L}(\bw(t))$. We clearly have $f(0)=0$, and by differentiation and the inequality above, it is easily verified that $f'(t)\leq 0$ whenever $f(t)\geq 1$. Since $f(t)$ is continuous, we must have $f(t)\leq 1$ for all $t$, hence $\hat{L}(\bw(t))\leq \frac{1}{\gamma^2 t}$.
\end{proof}

Using this lemma, we get the following corollary of \thmref{thm:key} for the exponential loss:
\begin{theorem}\label{thm:marginquantiles}
Under assumption \ref{assump:separable}, if $\ell$ is the exponential loss, then for any $T> 1/\gamma^2$ and any $\alpha\in [0,1]$, it holds that $\bx_i^\top \bar{\bw}(T)> \frac{1-\alpha}{2}\cdot \gamma$ for at least $(1-(\gamma^2 T)^{-\alpha})m$ of the indices $i\in [m]$. 
\end{theorem}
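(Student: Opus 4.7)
The plan is to combine the two ingredients just established: \lemref{lem:Ldecayexp} gives a quantitative upper bound on $\hat L(\bw(T))$, and \thmref{thm:key} converts such an upper bound into a statement about the fraction of data points on which the normalized iterate achieves a given margin. So the whole proof should be a direct instantiation, with the only real decision being the choice of the parameter $p$ in \thmref{thm:key}.

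Concretely, first I would set $\epsilon := \hat L(\bw(T))$. By \lemref{lem:Ldecayexp}, $\epsilon \le 1/(\gamma^2 T)$, and since we assume $T>1/\gamma^2$, we have $\epsilon \in (0,1] = (0,\ell(0)]$, as needed to apply \thmref{thm:key}. Second, I would choose $p := (\gamma^2 T)^{-\alpha}$ and check that $p \in [\epsilon/\ell(0),\,1]$: we have $p\le 1$ because $\gamma^2 T>1$ and $\alpha\ge 0$, and $p\ge \epsilon$ because $\epsilon \le (\gamma^2 T)^{-1}\le (\gamma^2 T)^{-\alpha}=p$ (using $\alpha\le 1$). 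Third, I would plug this choice of $p$ into \thmref{thm:key}, which guarantees that for at least $(1-p)m = (1-(\gamma^2 T)^{-\alpha})m$ indices $i$,
\[
\bx_i^\top \bar{\bw}(T) ~>~ \frac{\gamma}{2}\cdot\frac{\ell^{-1}(\epsilon/p)}{\ell^{-1}(\epsilon)}.
\]

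Finally, I would evaluate this ratio for the exponential loss. Since $\ell(z)=\exp(-z)$, its inverse on $(0,1]$ is $\ell^{-1}(y)=\log(1/y)$, so the ratio becomes $\log(p/\epsilon)/\log(1/\epsilon)$. Using $\log(p/\epsilon)=\log(1/\epsilon)+\log(p)=\log(1/\epsilon)-\alpha\log(\gamma^2 T)$, and the fact that $\log(1/\epsilon)\ge \log(\gamma^2 T)$ (which follows from $\epsilon\le 1/(\gamma^2 T)$), I get
\[
\frac{\log(p/\epsilon)}{\log(1/\epsilon)} ~\ge~ \frac{\log(1/\epsilon)-\alpha\log(1/\epsilon)}{\log(1/\epsilon)} ~=~ 1-\alpha,
\]
which yields the desired margin lower bound $\frac{1-\alpha}{2}\gamma$.

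There is no real obstacle here: once one sees that the correct tuning is $p=(\gamma^2 T)^{-\alpha}$, everything else is bookkeeping. The only subtle point is making sure the side conditions on $p$ and $\epsilon$ required by \thmref{thm:key} are met for all $\alpha\in[0,1]$, which is where the hypothesis $T>1/\gamma^2$ is used.
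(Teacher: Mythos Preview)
Your proposal is correct and follows essentially the same route as the paper: bound $\epsilon=\hat L(\bw(T))$ via \lemref{lem:Ldecayexp}, apply \thmref{thm:key} with the exponential loss (so $\ell^{-1}(z)=\log(1/z)$), and choose $p=(\gamma^2 T)^{-\alpha}$, then simplify the resulting ratio using $\log(1/\epsilon)\ge \log(\gamma^2 T)$. The only cosmetic omission is that you do not explicitly note $\bw(T)\neq\mathbf{0}$, but this is immediate from $\epsilon<1=\hat L(\mathbf{0})$.
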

\begin{proof}
	By \lemref{lem:Ldecayexp}, we know that at time $T$, $\hat{L}(\bw(T))=\epsilon$ for some $\epsilon \leq 1/\gamma^2 T<1=\ell(0)$ (which implies $\bw(T)\neq \mathbf{0}$). Applying \thmref{thm:key}, and noting that $\ell^{-1}(z)=\log(1/z)$ and $\ell(0)=1$, we get that for any $p\in [1/\gamma^2 T,1]$, for at least $(1-p)m$ of the indices $i$,
	\[
	\bx_i^\top \bar{\bw}_T~>~ \frac{\gamma}{2}\cdot \frac{\log(p/\epsilon)}{\log(1/\epsilon)}~=~
	\frac{\gamma}{2}\cdot \left(1-\frac{\log(1/p)}{\log(1/\epsilon)}\right)
	~\geq~	\frac{\gamma}{2}\cdot \left(1-\frac{\log(1/p)}{\log(\gamma^2 T)}\right)~.
	\]
	In particular, picking $p=(\gamma^2 T)^{-\alpha}$ for some $\alpha \in [0,1]$, the result follows.
\end{proof}

Note that if $T> m^{1/\alpha}/\gamma^2$, then $(\gamma^2 T)^{-\alpha} < \frac{1}{m}$, which implies that for \emph{all} $i\in [m]$, $\bx_i^\top \bar{\bw}(T)$ is at least $\frac{1-\alpha}{2}\cdot \gamma$. However, even for smaller values of $T$, the theorem provides guarantees on the margin attained on most points in the dataset. Moreover, using standard margin-based generalization bounds for binary classification, this theorem implies that the predictors returned by gradient flow achieve low generalization error, uniformly over all sufficiently large $T$:

\begin{theorem}\label{thm:exp}
Let $\Dcal$ be some distribution over $\{\bx:\norm{\bx}\leq 1\}\times \{-1,+1\}$, such that there exists a unit vector $\bw_0$ and $\gamma>0$ satisfying $\Pr_{(\bx,y)\sim \Dcal}(y\bx^\top \bw_0\geq \gamma)=1$. If we sample $m$ points $(\bx_1,y_1),\ldots,(\bx_m,y_m)$ i.i.d. from $\Dcal$, and run gradient flow on $\bw\mapsto \frac{1}{m}\sum_{i=1}^{m}\ell(y_i \bx_i^\top \bw)$, where $\ell$ is the exponential loss, then with probability at least $1-\delta$ over the sample, it holds for any time $T$ that
\[
\Pr_{(\bx,y)\sim D}(\sign(\bx^\top \bw(T))\neq y)~\leq~
\tilde{\Ocal}\left(\frac{1}{\gamma^2 T}+\frac{1}{\gamma^2 m}\right)~,
\]
where the $\tilde{\Ocal}$ notation hides universal constants and factors polylogarithmic in $\gamma^2 m$ and $1/\delta$. 
\end{theorem}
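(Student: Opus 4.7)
The plan is to combine \thmref{thm:marginquantiles}, which controls the fraction of training points on which $\bar{\bw}(T)$ fails to achieve a constant fraction of the margin $\gamma$, with a standard uniform margin-based generalization bound for unit-norm linear predictors. Since such a generalization bound holds simultaneously for every $\bar{\bw}(T)$ from a single draw of the sample, the ``for any time $T$'' quantifier comes for free once the uniform bound is in hand.

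First I would invoke a margin-based generalization bound of the following form: with probability at least $1-\delta$ over the i.i.d.\ sample, for every unit vector $\bw$ and every $\gamma' \in (0,1]$ (discretized on a geometric grid and union-bounded over $\Ocal(\log m)$ scales),
\[
\Pr_{(\bx,y) \sim \Dcal}\bigl(y\bx^\top \bw \le 0\bigr) ~\le~ 2\,\hat{\Pr}\bigl(y\bx^\top \bw \le \gamma'\bigr) ~+~ \tilde{\Ocal}\!\left(\frac{1}{(\gamma')^2 m}\right).
\]
This is a standard local-Rademacher / Bartlett--Mendelson-style statement for ramp-loss compositions with unit-norm linear functionals on features of norm $\le 1$: the worst-case Rademacher complexity at scale $\gamma'$ is $\Ocal(1/(\gamma'\sqrt m))$, and a localization argument together with AM-GM applied to the cross term $\sqrt{\hat R \cdot c/m}$ (with $c = 1/(\gamma')^2$) converts the slow rate into the fast-rate form above.

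Second, I would apply \thmref{thm:marginquantiles} to $\bar{\bw}(T)$ with the parameter choice $\alpha = 1 - 1/\log(\gamma^2 T)$ (assuming $T > e/\gamma^2$; the complementary case is vacuous, since the claimed bound then exceeds $1$). This yields
\[
\gamma' := \frac{(1-\alpha)\gamma}{2} = \frac{\gamma}{2\log(\gamma^2 T)}, \qquad \hat{\Pr}\bigl(y\bx^\top \bar{\bw}(T) \le \gamma'\bigr) ~\le~ (\gamma^2 T)^{-\alpha} ~=~ \tilde{\Ocal}\!\left(\frac{1}{\gamma^2 T}\right),
\]
using the elementary identity $(\gamma^2 T)^{1/\log(\gamma^2 T)} = e$. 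Substituting into the generalization bound and absorbing the resulting $\log^2(\gamma^2 T)$ factor into $\tilde{\Ocal}$ yields the target $\tilde{\Ocal}(1/(\gamma^2 T) + 1/(\gamma^2 m))$ upper bound on the true $0$-$1$ error.

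The main obstacle is securing the fast $1/m$ rate in the generalization step: a naive Rademacher bound with the ramp loss gives only $\tilde{\Ocal}(1/(\gamma\sqrt T) + 1/(\gamma \sqrt m))$, which is strictly weaker than the claim. The fast rate relies on exploiting the near-realizable structure (the empirical margin error is $\tilde{\Ocal}(1/(\gamma^2 T))$, not constant) via a local-Rademacher localization or an equivalent Vapnik-type relative deviation inequality. Once this ingredient is in place, the uniform-in-$T$ aspect is automatic, and the discretizations over $\gamma'$ and over $T$ cost only $\operatorname{polylog}(\gamma^2 m)$ factors that are subsumed into $\tilde{\Ocal}$.
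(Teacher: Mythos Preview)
Your overall strategy matches the paper's: invoke a fast-rate margin generalization bound of the form $p + \tilde{\Ocal}(1/(\hat\gamma^2 m))$ (the paper cites McAllester's PAC-Bayesian bound and applies AM--GM to the cross term $\sqrt{p/(\hat\gamma^2 m)}$, just as you suggest), then plug in \thmref{thm:marginquantiles} with $p = (\gamma^2 T)^{-\alpha}$ and $\hat\gamma = (1-\alpha)\gamma/2$.

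However, your specific parameter choice $\alpha = 1 - 1/\log(\gamma^2 T)$ does not deliver the stated theorem. With that choice, $\gamma' = \gamma/(2\log(\gamma^2 T))$, so the generalization term $1/((\gamma')^2 m)$ becomes $4\log^2(\gamma^2 T)/(\gamma^2 m)$. You propose to absorb this $\log^2(\gamma^2 T)$ factor into the $\tilde{\Ocal}$, but the theorem statement is explicit that the $\tilde{\Ocal}$ hides only factors polylogarithmic in $\gamma^2 m$ and $1/\delta$ --- not in $T$. Indeed, your bound diverges as $T\to\infty$, so it cannot hold ``for any time $T$'' as claimed; this is exactly the overfitting-for-large-$T$ scenario the theorem is meant to rule out.

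The paper fixes this with a case split keyed to $m$ rather than $T$. If $\gamma^2 T \ge (\gamma^2 m)^2$, it takes $\alpha = 1/2$: the first term $(\gamma^2 T)^{-1/2}$ is then already at most $1/(\gamma^2 m)$, and $1-\alpha$ is a constant so $\hat\gamma = \Theta(\gamma)$. If $\gamma^2 T < (\gamma^2 m)^2$, it takes $\alpha = 1 - 1/(2\log(\gamma^2 m))$: then $(\gamma^2 T)^{1-\alpha} \le (\gamma^2 m)^{1/\log(\gamma^2 m)} = e$, while $1/(1-\alpha)^2 = 4\log^2(\gamma^2 m)$ is polylogarithmic in $\gamma^2 m$ as required. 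The essential point you are missing is that $\alpha$ must be chosen so that $\hat\gamma$ stays bounded below by $\gamma/\mathrm{polylog}(\gamma^2 m)$ uniformly in $T$; otherwise the $1/(\hat\gamma^2 m)$ term cannot be controlled for large $T$.
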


As discussed in the introduction, this is essentially the best behavior we can hope for (up to log factors): For $T\leq m$, the generalization error decays as $\tilde{\Ocal}(1/\gamma^2 T)$, which is also the bound on the empirical risk (see \lemref{lem:Ldecayexp}). Once $T\geq m$, the generalization error becomes $\tilde{\Ocal}(1/\gamma^2 m)$, and stays there regardless of how large $T$ is. 

\begin{proof}[Proof of \thmref{thm:exp}]
	The bound in the theorem is vacuous when $\gamma^2 m\leq 1$ or $\gamma^2 T\leq 1$, so we will assume without loss of generality that both quantities are larger than $1$. 
	
	Standard margin-based generalization bounds (e.g. \citet{mcallester2003simplified}) imply that in our setting, if we pick $m$ points i.i.d., then with probability at least $1-\delta$, any vector $\bw$ for which $\max_i y_i\bx_i^\top \bar{\bw}\geq \hat{\gamma}$ for all but $pm$ of the points satisfies
	\[
	\Pr_{(\bx,y)\sim D}(\sign(\bw^\top \bx)\neq y)~\leq~ p+\tilde{\Ocal}\left(\sqrt{p\cdot \frac{1}{\hat{\gamma}^2 m}}+\frac{1}{\hat{\gamma}^2 m}\right)
	~\leq~ \tilde{\Ocal}\left(p+\frac{1}{\hat{\gamma}^2 m}\right)~,
	\]
	where the $\tilde{\Ocal}$ hides universal constants and factors polylogarithmic in $1/\delta$ and $\gamma^2 m$. In particular, this can be applied uniformly for $\bw(T)$ for any $T$. 
	By \thmref{thm:marginquantiles}, we can substitute $p=(\gamma^2 T)^{-\alpha}$ and $\hat{\gamma}=(1-\alpha)\gamma/2$, to get that with probability at least $1-\delta$, 
	\begin{equation}\label{eq:gen1}
	\Pr_{(\bx,y)\sim D}(\sign(\bx^\top \bw(T))\neq y) ~\leq~ \tilde{\Ocal}\left(\frac{1}{(\gamma^2 T)^\alpha}+ \frac{1}{(1-\alpha)^2\gamma^2 m}\right)~.
	\end{equation}
	This holds for any $\alpha\in [0,1]$. In particular, if $\gamma^2 T\geq (\gamma^2 m)^2$, pick $\alpha=1/2$, in which case \eqref{eq:gen1} is at most
		\[
		\tilde{\Ocal}\left(\frac{1}{(\gamma^2 T)^{1/2}}+ \frac{1}{\gamma^2 m}\right)~=~
		\tilde{\Ocal}\left(\frac{1}{\gamma^2 m}\right)~,
		\]
	and if $\gamma^2 T < (\gamma^2 m)^2$, pick $\alpha=1-\frac{1}{2\log(\gamma^2 m)}$, in which case \eqref{eq:gen1} is at most
		\[
		\tilde{\Ocal}\left(\frac{(\gamma^2 T)^{1/2\log(\gamma^2 m)}}{\gamma^2 T}+ \frac{1}{\gamma^2 m}\right)
		~\leq~
		\tilde{\Ocal}\left(\frac{(\gamma^2 m)^{1/\log(\gamma^2 m)}}{\gamma^2 T}+ \frac{1}{\gamma^2 m}\right)~=~ \tilde{\Ocal}\left(\frac{1}{\gamma^2 T}+\frac{1}{\gamma^2 m}\right)~,
		\]
	where we used the fact that $z^{1/\log(z)}=\exp(\log(z)/\log(z))=\exp(1)$. Combining the two cases, the result follows.
\end{proof}

\section{Gradient Descent and Stochastic Gradient Descent}\label{sec:gdsgd}

Having discussed gradient flow, we show in this section how essentially identical results can be obtained for gradient descent and stochastic gradient descent. 

\subsection{Gradient Descent}

Gradient descent, which is probably the simplest and most well-known gradient method, optimizes $\hat{L}$ by initializing at some point $\bw_0$, and performing iterations of the form $\bw_{t+1}=\bw_t-\eta_t \nabla \hat{L}(\bw_t)$, where $\eta_1,\eta_2,\ldots$ are step size parameters. We will utilize the following standard assumption:
\begin{assumption}\label{assump:smooth}
	The derivative of $\ell$ is $\mu$-Lipschitz, and $0<\eta_t\leq 1/\mu$ for all $t$. 
\end{assumption}

Inspecting the analysis for gradient flow from the previous section, we note that we relied on the algorithm's structure only at two points: In \lemref{lem:key}, to bound the norm of the points along the trajectory, and in \lemref{lem:Ldecayexp}, to upper bound the values of $\hat{L}$. Fortunately, we can provide analogues of these two lemmas for gradient descent:

\begin{lemma}\label{lem:keygd}
	Under assumptions \ref{assump:separable},\ref{assump:loss} and \ref{assump:smooth}, fix some index $T\geq 1$, and let $\bw^*$ be any vector such that $\hat{L}(\bw^*)\leq \hat{L}(\bw_T)$. Then $\max_{t\in [T]}\norm{\bw_t}\leq 2\norm{\bw^*}$.
\end{lemma}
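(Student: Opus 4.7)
\textbf{Proof plan for \lemref{lem:keygd}.} The plan is to mimic the gradient-flow argument in \lemref{lem:key}, replacing the two continuous-time ingredients (monotone decrease of $\hat{L}(\bw(t))$ and monotone decrease of $\|\bw(t)-\bw^*\|^2$) by their discrete-time counterparts, both of which become available under Assumption \ref{assump:smooth}.

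First I would establish monotone decrease of $\hat{L}$ along the iterates. Since $\ell$ has $\mu$-Lipschitz derivative and each $\bx_i$ has norm at most $1$, the empirical risk $\hat{L}$ is itself $\mu$-smooth. The standard descent lemma together with the step-size condition $\eta_t\le 1/\mu$ then gives
\[
\hat{L}(\bw_{t+1}) \;\le\; \hat{L}(\bw_t) - \tfrac{\eta_t}{2}\,\|\nabla \hat{L}(\bw_t)\|^2,
\]
so $\hat{L}(\bw_t)$ is non-increasing in $t$, and in particular $\hat{L}(\bw_t)\ge \hat{L}(\bw_T)\ge \hat{L}(\bw^*)$ for every $t\in\{0,1,\ldots,T\}$. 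This will play the role of the identity $\frac{\partial}{\partial t}\hat L(\bw(t)) = -\|\nabla\hat L(\bw(t))\|^2 \le 0$ used in the flow proof.

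Next I would show that $\|\bw_t-\bw^*\|$ is non-increasing for $t=0,\ldots,T$. Expanding the update rule,
\[
\|\bw_{t+1}-\bw^*\|^2 \;=\; \|\bw_t-\bw^*\|^2 \;-\; 2\eta_t\,\nabla\hat L(\bw_t)^\top(\bw_t-\bw^*) \;+\; \eta_t^2\,\|\nabla\hat L(\bw_t)\|^2.
\]
By convexity, $\nabla\hat L(\bw_t)^\top(\bw_t-\bw^*)\ge \hat L(\bw_t)-\hat L(\bw^*)$, and by the descent inequality above, $\eta_t\|\nabla\hat L(\bw_t)\|^2 \le 2(\hat L(\bw_t)-\hat L(\bw_{t+1}))$. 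Substituting both bounds yields
\[
\|\bw_{t+1}-\bw^*\|^2 \;\le\; \|\bw_t-\bw^*\|^2 \;+\; 2\eta_t\bigl(\hat L(\bw^*)-\hat L(\bw_{t+1})\bigr),
\]
and the second term is non-positive whenever $\hat L(\bw_{t+1})\ge \hat L(\bw^*)$, which is exactly what the first step ensures for all $t+1\le T$. Hence for every $t\in[T]$, $\|\bw_t-\bw^*\|\le \|\bw_0-\bw^*\|=\|\bw^*\|$ (using $\bw_0=\mathbf{0}$), and the triangle inequality finishes the proof: $\|\bw_t\|\le 2\|\bw^*\|$.

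There is no real obstacle, only two small bookkeeping points to be careful about. The first is that both ingredients rely essentially on the coupling between the step-size bound $\eta_t\le 1/\mu$ and the smoothness of $\hat L$: without this coupling the cross term $\eta_t^2\|\nabla\hat L(\bw_t)\|^2$ cannot be absorbed into a telescoping sum. The second is the direction of the inequality $\hat L(\bw_{t+1})\ge \hat L(\bw^*)$, which forces the telescoping to stop at index $T$, exactly matching the statement of the lemma; applying the same argument up to an arbitrary $t\le T$ (which is legitimate since $\hat L(\bw_t)\ge \hat L(\bw_T)\ge \hat L(\bw^*)$ by monotonicity) is what gives the uniform bound over $t\in[T]$ rather than just at $t=T$.
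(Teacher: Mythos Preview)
Your proposal is correct and follows essentially the same route as the paper's proof: first use $\mu$-smoothness and the step-size bound to get the descent inequality (hence monotonicity of $\hat L(\bw_t)$), then combine the expansion of $\|\bw_{t+1}-\bw^*\|^2$ with convexity and the descent inequality to show $\|\bw_t-\bw^*\|$ is non-increasing up to index $T$, and finish with the triangle inequality from $\bw_0=\mathbf{0}$. The only cosmetic difference is that the paper first converts the descent inequality into the bound $\|\nabla\hat L(\bw_t)\|^2\le \tfrac{2}{\eta_t}(\hat L(\bw_t)-\hat L(\bw^*))$ before substituting, whereas you keep $\hat L(\bw_{t+1})$ in the bound and invoke $\hat L(\bw_{t+1})\ge \hat L(\bw^*)$ at the end; these are equivalent rearrangements of the same argument.
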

\begin{lemma}{(\citet[Theorem 3.1]{ji2018risk})}\label{lem:LdecaylogisticGD}
	If $\ell$ is the logistic loss, then under assumption \ref{assump:smooth}, gradient descent with step size $\eta=1$ satisfies $\hat{L}(\bw_T)~\leq~\frac{1}{T}+\frac{\log^2(T)}{2\gamma^2 T}$ for any $T$. 
\end{lemma}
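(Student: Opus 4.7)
The plan is to apply the standard last-iterate bound for gradient descent on a smooth convex objective, combined with a carefully chosen comparator point that exploits the logistic loss tail. Since $\ell$ is convex and has $\mu$-Lipschitz derivative, $\hat{L}$ itself is convex with $\mu$-Lipschitz gradient (each term $\ell(\bx_i^\top \bw)$ inherits smoothness with constant $\mu\norm{\bx_i}^2\leq \mu$). With step size $\eta=1\leq 1/\mu$ (this is consistent because for the logistic loss $\mu\leq 1/4$), the usual descent-lemma argument gives
\[
\hat{L}(\bw_{t+1})\leq \hat{L}(\bw_t)-\tfrac{\eta}{2}\norm{\nabla \hat{L}(\bw_t)}^2,
\]
so $\hat{L}(\bw_t)$ is monotonically decreasing in $t$, and for any comparator $\bw^*$ the standard telescoping with convexity yields
\[
\hat{L}(\bw_T)-\hat{L}(\bw^*)\;\leq\;\frac{\norm{\bw^*-\bw_0}^2}{2\eta T}\;=\;\frac{\norm{\bw^*}^2}{2T},
\]
where we use $\bw_0=\mathbf{0}$ and $\eta=1$.

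The key step is choosing $\bw^*$. I would take $\bw^*=\frac{\log T}{\gamma}\,\bw_0$, where $\bw_0$ is a unit-norm max-margin separator from Assumption~\ref{assump:separable}. Then $\norm{\bw^*}^2=\log^2(T)/\gamma^2$, and for every $i$ we have $\bx_i^\top \bw^*\geq \log T$, so
\[
\hat{L}(\bw^*)\;=\;\frac{1}{m}\sum_{i=1}^{m}\log\!\left(1+e^{-\bx_i^\top \bw^*}\right)\;\leq\;\log\!\left(1+\tfrac{1}{T}\right)\;\leq\;\tfrac{1}{T},
\]
using monotonicity of $\ell$ and $\log(1+x)\leq x$. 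Plugging these two bounds into the displayed inequality above gives exactly $\hat{L}(\bw_T)\leq \frac{1}{T}+\frac{\log^2(T)}{2\gamma^2 T}$, as claimed.

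The only nontrivial ingredient is the last-iterate inequality; the cleanest route I would take is: apply smoothness to get the per-step descent $\hat{L}(\bw_{t+1})\leq \hat{L}(\bw_t)-\frac{1}{2}\norm{\nabla\hat{L}(\bw_t)}^2$, then combine with convexity, $\nabla\hat{L}(\bw_t)^\top(\bw_t-\bw^*)\geq \hat{L}(\bw_t)-\hat{L}(\bw^*)$, to rewrite the right-hand side and obtain
\[
\norm{\bw_{t+1}-\bw^*}^2\;\leq\;\norm{\bw_t-\bw^*}^2-2\bigl(\hat{L}(\bw_{t+1})-\hat{L}(\bw^*)\bigr).
\]
Telescoping from $0$ to $T-1$ and using monotonic decrease of $\hat{L}(\bw_t)$ yields $\hat{L}(\bw_T)-\hat{L}(\bw^*)\leq \norm{\bw^*}^2/(2T)$. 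I do not expect any real obstacle here beyond bookkeeping; the interesting modelling choice is the scaling $\log(T)/\gamma$ of the comparator, which balances the two error terms and reflects the slow exponential tail of the logistic loss, as opposed to the exponential-loss analysis in Lemma~\ref{lem:Ldecayexp} where separability plus the identity $\ell=-\ell'$ gives a direct differential-inequality argument.
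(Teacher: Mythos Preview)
Your proof is correct. Note that the paper does not actually prove this lemma itself; it is quoted from \citet[Theorem 3.1]{ji2018risk}, and your argument is essentially the standard one underlying that result: the $O(\norm{\bw^*}^2/(\eta T))$ last-iterate bound for smooth convex gradient descent, combined with the comparator $\bw^*=(\log T/\gamma)\,\bw_0$ that balances the optimization error $\norm{\bw^*}^2/(2T)$ against the approximation error $\hat L(\bw^*)\le\log(1+1/T)\le 1/T$.

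One cosmetic point: you overload $\bw_0$ to denote both the initialization and the unit max-margin separator in the same paragraph; renaming one of them would avoid confusion. Otherwise the derivation is clean, including the telescoping step
\[
\norm{\bw_{t+1}-\bw^*}^2 \;\le\; \norm{\bw_t-\bw^*}^2 - 2\eta\bigl(\hat L(\bw_{t+1})-\hat L(\bw^*)\bigr),
\]
obtained by combining convexity with the descent lemma $\eta^2\norm{\nabla\hat L(\bw_t)}^2\le 2\eta(\hat L(\bw_t)-\hat L(\bw_{t+1}))$, and then using monotonicity of $\hat L(\bw_t)$ to pass from the average to the last iterate.
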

The proof of \lemref{lem:keygd} (which is a slight variation on the proof of \lemref{lem:key}, appears in Appendix \ref{app:proofs}. 

With these lemmas, we can prove analogues of the theorems from the previous section, this time for gradient descent and for the logistic loss. The resulting bounds are identical up to constants and logarithmic factors:

\begin{theorem}\label{thm:keygd}
	Under assumptions \ref{assump:separable}, \ref{assump:loss} and \ref{assump:smooth}, let $\bw\neq \mathbf{0}$ be some point reached by gradient descent, such that $\hat{L}(\bw)=\epsilon$ for some $\epsilon\in (0,\ell(0)]$. Then for any $p\in \left[\frac{\epsilon}{\ell(0)},1\right]$, for at least $(1-p)m$ of the indices $i\in [m]$,
	\[
	\bx_i^\top \bar{\bw} ~>~ \frac{\gamma}{2}\cdot \frac{\ell^{-1}(\epsilon/p)}{\ell^{-1}(\epsilon)}~.
	\]
\end{theorem}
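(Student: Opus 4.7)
The plan is to reprise the proof of \thmref{thm:key} essentially verbatim, with \lemref{lem:keygd} substituted for \lemref{lem:key}. Since the original proof of \thmref{thm:key} only invoked the gradient-flow structure through the norm bound of \lemref{lem:key}, and every other step used just convexity/monotonicity of $\ell$ and Markov's inequality, the same chain of reasoning will work once we have the analogous norm bound for gradient descent.

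Concretely, I would start by taking a max-margin unit vector $\bw_0$ with $\min_i \bx_i^\top \bw_0\geq \gamma$ and defining the scaled comparator $\bw^*:=\frac{\ell^{-1}(\epsilon)}{\gamma}\bw_0$. Using monotonicity of $\ell$ and the separation assumption, each data term satisfies $\ell(\bx_i^\top \bw^*)\leq \ell(\ell^{-1}(\epsilon))=\epsilon$, so $\hat L(\bw^*)\leq \epsilon=\hat L(\bw)$. This is exactly the hypothesis needed by \lemref{lem:keygd}, which then gives
\[
\norm{\bw}~\leq~ 2\norm{\bw^*}~=~\frac{2\ell^{-1}(\epsilon)}{\gamma}.
\]

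From here I finish with Markov's inequality applied to the random variable $\ell(\bx_I^\top\bw)$ where $I$ is uniform on $[m]$: since its mean equals $\hat L(\bw)=\epsilon$, for any $p\geq \epsilon/\ell(0)$ (which ensures $\epsilon/p\in(0,\ell(0)]$ so $\ell^{-1}(\epsilon/p)$ is defined) we have $\Pr_I\bigl(\ell(\bx_I^\top\bw)\geq \epsilon/p\bigr)\leq p$. Inverting through the monotonically decreasing $\ell$ and dividing by the norm bound above translates this tail bound into the margin statement
\[
\bx_i^\top \bar{\bw}~>~\frac{\gamma}{2}\cdot \frac{\ell^{-1}(\epsilon/p)}{\ell^{-1}(\epsilon)}
\]
for at least $(1-p)m$ indices $i$, matching the claim.

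The only potentially non-trivial step is \lemref{lem:keygd} itself (which we are allowed to assume), and this is where the gradient-descent analysis deviates from gradient flow: one must control how much the distance $\norm{\bw_t-\bw^*}$ can grow in a single discrete step, using the $\mu$-smoothness of $\hat L$ and the step-size condition $\eta_t\leq 1/\mu$ of Assumption~\ref{assump:smooth}. Given that lemma, the present theorem requires no additional ideas, only the routine substitutions above.
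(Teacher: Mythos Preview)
Your proposal is correct and mirrors the paper's own approach exactly: the paper states that the proof of \thmref{thm:keygd} is identical to the proof of \thmref{thm:key}, with \lemref{lem:keygd} replacing \lemref{lem:key}. Your write-up even makes the right observation that the gradient-descent structure enters only through the norm bound, so once \lemref{lem:keygd} is in hand, the remaining Markov-inequality argument carries over verbatim.
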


\begin{theorem}\label{thm:marginquantilesgd}
	Under assumption \ref{assump:separable}, if $\ell$ is the logistic loss, and we use a fixed step size of $\eta_t=1$ for all $t$, then for any $T>4$ such that $\frac{\log^2(T)}{\gamma^2 T}<\ell(0)$, and any $\alpha\in [0,1]$, the gradient descent iterates satisfy $\bx_i^\top \bar{\bw}_T> \frac{1-\alpha}{2}\cdot \gamma$ for at least $\left(1-2\left(\frac{\log^2(T)}{\gamma^2 T}\right)^{\alpha}\right)m$ of the indices $i\in [m]$. 
\end{theorem}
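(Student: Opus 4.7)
The plan is to mimic the proof of Theorem~\ref{thm:marginquantiles} for gradient flow, with two substitutions: Theorem~\ref{thm:keygd} replaces Theorem~\ref{thm:key}, and Lemma~\ref{lem:LdecaylogisticGD} replaces Lemma~\ref{lem:Ldecayexp}. The remaining work is to plug in the correct inverse of the logistic loss and to pick $p$ appropriately.

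First, I would apply Lemma~\ref{lem:LdecaylogisticGD} to write $\epsilon:=\hat{L}(\bw_T)\leq \tfrac{1}{T}+\tfrac{\log^2(T)}{2\gamma^2 T}$. Since $\gamma\leq 1$ and $T>4$ implies $\log^2(T)>\log^2(4)>1$, the first term is dominated by the second, giving $\epsilon\leq \tfrac{\log^2(T)}{\gamma^2 T}$. The hypothesis $\tfrac{\log^2(T)}{\gamma^2 T}<\ell(0)$ then puts $\epsilon\in(0,\ell(0)]$, so Theorem~\ref{thm:keygd} applies and I can use its margin guarantee with $\bw=\bw_T$.

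Next, I need two-sided bounds on $\ell^{-1}$ for the logistic loss $\ell(z)=\log(1+e^{-z})$, where $\ell^{-1}(z)=-\log(e^z-1)$. The inequality $e^z-1\geq z$ (from convexity of the exponential) gives $\ell^{-1}(z)\leq \log(1/z)$, while on $(0,\log 2]$ the inequality $e^z-1\leq 2z$ (verified by noting that $2z-(e^z-1)$ vanishes at $0$ and has non-negative derivative $2-e^z$ on this interval) gives $\ell^{-1}(z)\geq \log\!\bigl(1/(2z)\bigr)$. Combining these bounds yields
\[
\frac{\ell^{-1}(\epsilon/p)}{\ell^{-1}(\epsilon)} ~\geq~ \frac{\log(p/(2\epsilon))}{\log(1/\epsilon)} ~=~ 1-\frac{\log(2/p)}{\log(1/\epsilon)},
\]
which is the logistic-loss analogue of the chain of equalities used in the proof of Theorem~\ref{thm:marginquantiles}.

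Finally, since $\epsilon\leq \tfrac{\log^2(T)}{\gamma^2 T}$, we have $\log(1/\epsilon)\geq \log\!\bigl(\gamma^2 T/\log^2(T)\bigr)$. Setting $p:=2\bigl(\log^2(T)/(\gamma^2 T)\bigr)^\alpha$ gives $\log(2/p)=\alpha\log\!\bigl(\gamma^2 T/\log^2(T)\bigr)$, so the ratio $\log(2/p)/\log(1/\epsilon)\leq \alpha$, and hence $\bx_i^\top\bar{\bw}_T>\tfrac{\gamma}{2}(1-\alpha)$ on at least $(1-p)m=\bigl(1-2(\log^2(T)/(\gamma^2 T))^\alpha\bigr)m$ indices. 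A small sanity check confirms that $p\geq \epsilon/\ell(0)$ whenever the statement is non-vacuous ($p\leq 1$), since $\epsilon\leq \tfrac{\log^2(T)}{\gamma^2 T}\leq p/2<p$ and $\ell(0)=\log 2<1$. The only real subtlety, and the source of the extra factor of $2$ compared with the exponential-loss Theorem~\ref{thm:marginquantiles}, is the gap between the $\log(1/z)$ upper bound and the $\log(1/(2z))$ lower bound on $\ell^{-1}$; bookkeeping this factor correctly through the substitution is the one spot where care is needed.
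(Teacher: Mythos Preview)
Your proposal is correct and follows essentially the same route as the paper: bound $\epsilon\le \log^2(T)/(\gamma^2 T)$ via \lemref{lem:LdecaylogisticGD}, sandwich $\ell^{-1}(z)$ between $\log(1/z)$ and $\log(1/(2z))$, apply \thmref{thm:keygd}, and choose $p$ to make the ratio $\log(2/p)/\log(1/\epsilon)$ at most $\alpha$. The only cosmetic difference is that the paper sets $p=2\epsilon^\alpha$ and then bounds $\epsilon^\alpha\le(\log^2(T)/(\gamma^2 T))^\alpha$, whereas you set $p=2(\log^2(T)/(\gamma^2 T))^\alpha$ directly; both yield the same conclusion. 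One tiny expository slip: in your sanity check, the relevant fact is $\ell(0)=\log 2>\tfrac12$ (so $\epsilon/\ell(0)<2\epsilon\le p$), not $\ell(0)<1$.
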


\begin{theorem}\label{thm:logisticgd}
	Let $\Dcal$ be some distribution over $\{\bx:\norm{\bx}\leq 1\}\times \{-1,+1\}$, such that there exists a unit vector $\bw_0$ and $\gamma>0$ satisfying $\Pr_{(\bx,y)\sim \Dcal}(y\bx^\top \bw_0\geq \gamma)=1$. If we sample $m$ points $(\bx_1,y_1),\ldots,(\bx_m,y_m)$ i.i.d. from $\Dcal$, and run gradient descent with fixed step sizes $\eta_t=1$ on $\bw\mapsto \frac{1}{m}\sum_{i=1}^{m}\ell(y_i \bx_i^\top \bw)$, where $\ell$ is the logistic loss, then with probability at least $1-\delta$ over the sample, it holds for any iteration $T$ that
	\[
	\Pr_{(\bx,y)\sim D}(\sign(\bx^\top \bw_T)\neq y)~\leq~
	\tilde{\Ocal}\left(\frac{1}{\gamma^2 T}+\frac{1}{\gamma^2 m}\right)~,
	\]
	where the $\tilde{\Ocal}$ notation hides universal constants and factors polylogarithmic in $\gamma^2 m$, $T$ and $1/\delta$. 
\end{theorem}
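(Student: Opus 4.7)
The plan is to follow exactly the template of the proof of \thmref{thm:exp}, substituting the margin-quantile bound of \thmref{thm:marginquantilesgd} for the one used there, and then verifying that the extra $\log^2(T)$ factor appearing in that bound only contributes polylogarithmic overhead, which is absorbed by the $\tilde{\Ocal}$ notation.

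First I would dispose of the easy regime: if $\gamma^2 m\leq 1$ or $\gamma^2 T\leq 1$ (or more generally, if $T$ is too small for \thmref{thm:marginquantilesgd} to apply), the claimed bound is vacuous, so we can assume we are in a regime where \thmref{thm:marginquantilesgd} gives a nontrivial statement. Next, I would invoke the same margin-based generalization bound of \citet{mcallester2003simplified} used in the proof of \thmref{thm:exp}: with probability at least $1-\delta$ over the sample, \emph{for every} vector $\bw$ that attains normalized margin at least $\hat{\gamma}$ on all but $pm$ sample points, the $0$-$1$ generalization error is bounded by $\tilde{\Ocal}(p+1/(\hat{\gamma}^2 m))$, with polylogarithmic factors in $1/\delta$ and $\gamma^2 m$. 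Since this is uniform over $\bw$, we may apply it simultaneously to $\bw_T$ for every iteration $T$.

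Now I would apply \thmref{thm:marginquantilesgd} with any $\alpha\in[0,1]$, which lets us set $\hat{\gamma}=(1-\alpha)\gamma/2$ and $p = 2(\log^2(T)/\gamma^2 T)^{\alpha}$, yielding
\[
\Pr_{(\bx,y)\sim \Dcal}(\sign(\bx^\top \bw_T)\neq y)~\leq~ \tilde{\Ocal}\!\left(\left(\frac{\log^2(T)}{\gamma^2 T}\right)^{\alpha}+\frac{1}{(1-\alpha)^2\gamma^2 m}\right).
\]
As in the proof of \thmref{thm:exp}, I would split into two cases according to the size of $T$ relative to $m$. If $\gamma^2 T/\log^2(T)\geq (\gamma^2 m)^2$, take $\alpha=1/2$, and the first term is at most $\tilde{\Ocal}(1/\gamma^2 m)$. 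Otherwise, take $\alpha = 1-\tfrac{1}{2\log(\gamma^2 m)}$, so that $(1-\alpha)^{-2}$ is polylogarithmic and
\[
\left(\frac{\log^2(T)}{\gamma^2 T}\right)^{\alpha}~=~\left(\frac{\log^2(T)}{\gamma^2 T}\right)\cdot\left(\frac{\log^2(T)}{\gamma^2 T}\right)^{-1/(2\log(\gamma^2 m))},
\]
and the second factor is at most $(\gamma^2 m)^{1/\log(\gamma^2 m)}=e$ times polylog factors in $T$; this bounds the first term by $\tilde{\Ocal}(1/\gamma^2 T)$. Combining the two cases gives $\tilde{\Ocal}(1/\gamma^2 T+1/\gamma^2 m)$.

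The only mildly delicate point, and the one I would be most careful about, is confirming that all factors of $\log(T)$ introduced by \thmref{thm:marginquantilesgd} (as opposed to the cleaner $(\gamma^2 T)^{-\alpha}$ from \thmref{thm:marginquantiles}) really do remain polylogarithmic in $T$, $\gamma^2 m$, and $1/\delta$ after the choice of $\alpha$. Since the exponent $1/\log(\gamma^2 m)$ is tuned precisely to tame the polynomial-in-$T$ part of the numerator, and the additive $1/(1-\alpha)^2\gamma^2 m$ term contributes only a $\log^2(\gamma^2 m)$ factor, this check is routine, and the argument otherwise mirrors the gradient-flow case essentially verbatim.
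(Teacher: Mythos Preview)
Your proposal is correct and follows exactly the approach the paper indicates: it reproduces the proof of \thmref{thm:exp} verbatim, swapping in \thmref{thm:marginquantilesgd} for \thmref{thm:marginquantiles} and verifying that the additional $\log^2(T)$ factor is absorbed into the $\tilde{\Ocal}(\cdot)$ notation. The case split and choice of $\alpha$ you describe match the paper's, and your slightly more careful handling of the regime where \thmref{thm:marginquantilesgd} does not apply is appropriate.
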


The results can be easily generalized to other step size strategies. The proofs are essentially identical to the proofs from the previous section, except that we use Lemma \ref{lem:keygd} and \ref{lem:LdecaylogisticGD} instead of Lemmas \ref{lem:key} and \ref{lem:Ldecayexp}. In particular, the proof of \thmref{thm:keygd} is identical to the proof of \thmref{thm:key}; the proof of \thmref{thm:marginquantilesgd} is nearly identical to the proof of \thmref{thm:marginquantiles} (and is provided in Appendix \ref{app:proofs} for completeness); and the proof of \thmref{thm:logisticgd} is identical to the proof of \thmref{thm:exp}, except that we use \thmref{thm:marginquantilesgd} and have some additional logarithmic factors which gets absorbed into the $\tilde{\Ocal}()$ notation.

\subsection{Stochastic Gradient Descent}

We now turn to discuss the stochastic gradient descent (SGD) algorithm, perhaps the main workhorse of modern machine learning methods. We consider the simplest version of SGD for minimizing $\hat{L}(\bw)=\frac{1}{m}\sum_{i=1}^{m}\ell(\bx_i^\top \bw)$: We initialize $\bw_1$ at the origin $\mathbf{0}$, and for any $t\geq 1$, define $\bw_{t+1}:=\bw_t-\eta\ell'(\bx_{i_t}^\top \bw_t)\bx_{i_t}$, where $i_t\in [m]$ is chosen independently and uniformly at random (so that in expectation, $\E_{i_t}[\bw_{t+1}]=\bw_t-\eta\nabla \hat{L}(\bw(t))$, similar to the gradient descent update). We assume that at the end of $T$ iterations, the algorithm returns the average of the iterates obtained so far, $\bv_T:=\frac{1}{T}\sum_{t=1}^{T}\bw_t$.

To avoid yet another (and more complicated) repetition of the analysis from the previous section, we will take a somewhat different route, focusing on the logistic loss and fixed step sizes $\eta_t=1$, which allows us to directly utilize some existing results in the literature to get a bound on the margin behavior of SGD (analogous to \thmref{thm:marginquantiles} for gradient flow and \thmref{thm:marginquantilesgd} for gradient descent). We note that the analysis can be easily generalized to other constant step sizes.

\begin{theorem}\label{thm:marginquantilesSGD}
Under assumption \ref{assump:separable}, if $\ell$ is the logistic loss, and we use step sizes $\eta_t=1$, then for any $T> 1/\gamma^2$ and any $\alpha\in [0,1]$, the SGD iterates satisfy $\bx_i^\top \bar{\bv}_T> \frac{1-\alpha}{5}\cdot \gamma$ for at least $\left(1-\delta\right)m$ of the indices $i\in [m]$, where $\delta$ is a nonnegative random variable (dependent on the randomness of SGD), whose expectation is at most $\frac{8+4\log^2(\gamma^2 T)}{3\gamma^2 T^{\alpha}}$. 
\end{theorem}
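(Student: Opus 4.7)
The plan is to emulate the architecture of the proofs of \thmref{thm:key} and \thmref{thm:marginquantiles}, substituting stochastic regret-style analogues of \lemref{lem:key} and \lemref{lem:Ldecayexp} for the deterministic bounds used there.

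First, I will derive a combined per-step inequality for SGD by exploiting the self-bounded property of the logistic loss $|\ell'(z)|^2\le\ell(z)$ (which follows from $|\ell'(z)|\le\min\{1,\ell(z)\}$). Expanding $\norm{\bw_{t+1}-\bw^*}^2$ from the SGD update $\bw_{t+1}=\bw_t-\ell'(\bx_{i_t}^\top\bw_t)\bx_{i_t}$ and using convexity of $\ell$,
\[
\norm{\bw_{t+1}-\bw^*}^2 \le \norm{\bw_t-\bw^*}^2 - \ell(\bx_{i_t}^\top\bw_t) + 2\ell(\bx_{i_t}^\top\bw^*).
\]
Telescoping from $\bw_1=\mathbf{0}$, taking expectation, and passing to the averaged iterate $\bv_T$ via convexity of $\hat{L}$ and $\norm{\cdot-\bw^*}^2$ (Jensen's inequality),
\[
\E\norm{\bv_T-\bw^*}^2 + T\,\E\hat{L}(\bv_T) \le \norm{\bw^*}^2 + 2T\hat{L}(\bw^*).
\]
Plugging in $\bw^*=(\log(\gamma^2 T)/\gamma)\,\bw_0$ for $\bw_0$ a unit max-margin separator, so $\norm{\bw^*}^2=\log^2(\gamma^2 T)/\gamma^2$ and $\hat{L}(\bw^*)\le\ell(\log(\gamma^2 T))\le 1/(\gamma^2 T)$, this yields the simultaneous stochastic counterparts of \lemref{lem:Ldecayexp} and \lemref{lem:key},
\[
\E\hat{L}(\bv_T)\le\frac{\log^2(\gamma^2 T)+2}{\gamma^2 T},\qquad \E\norm{\bv_T-\bw^*}^2\le\frac{\log^2(\gamma^2 T)+2}{\gamma^2}.
\]

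Next, for any realization of SGD, the Markov-based derivation of \thmref{thm:key} carries over verbatim: for any target margin $\tau>0$,
\[
\delta \le \frac{\hat{L}(\bv_T)}{\ell(\tau\norm{\bv_T})} \le 2\hat{L}(\bv_T)\,e^{\tau\norm{\bv_T}},
\]
using $\ell(z)\ge e^{-z}/2$ for logistic $\ell$ at $z\ge 0$. Setting $\tau=(1-\alpha)\gamma/5$ and using $\norm{\bv_T}\le\norm{\bv_T-\bw^*}+\norm{\bw^*}$, the exponential factor decomposes as $(\gamma^2 T)^{(1-\alpha)/5}\cdot e^{\tau\norm{\bv_T-\bw^*}}$. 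I would then bound $\E[\hat{L}(\bv_T)\,e^{\tau\norm{\bv_T-\bw^*}}]$ by a conditioning argument, using the trivial bound $\delta\le 1$ on the event that $\norm{\bv_T-\bw^*}$ exceeds a suitable multiple of $\log(\gamma^2 T)/\gamma$ and invoking the moment bound above to control this tail event. Optimizing the decomposition and simplifying produces the claimed expectation bound $\E[\delta]\le (8+4\log^2(\gamma^2 T))/(3\gamma^2 T^\alpha)$.

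The main obstacle is the last step: the per-realization bound on $\delta$ is a nonlinear product of $\hat{L}(\bv_T)$ and $e^{\tau\norm{\bv_T}}$, whereas the regret-style inequality only controls $\hat{L}(\bv_T)$ and $\norm{\bv_T-\bw^*}^2$ in expectation (tied in the same linear combination). Turning these into an expectation bound on the product that decays like $T^{-\alpha}$ is the technical crux; the weaker constant $1/5$ in the margin threshold (compared with $1/2$ for gradient flow) is exactly the slack required so that the comparator-norm contribution $(\gamma^2 T)^{(1-\alpha)/5}$ combined with the $1/(\gamma^2 T)$ factor in $\E\hat{L}(\bv_T)$ collapse to the desired $T^{-\alpha}$ rate after absorbing the fluctuation $e^{\tau\norm{\bv_T-\bw^*}}$.
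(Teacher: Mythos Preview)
Your overall architecture is right and tracks the paper's: bound $\E[\hat{L}(\bv_T)]$, use Markov over $(i,\bv_T)$ jointly, then divide by $\norm{\bv_T}$ to pass to the normalized margin. Your self-bounded regret inequality and the resulting bound $\E[\hat{L}(\bv_T)]\le (2+\log^2(\gamma^2 T))/(\gamma^2 T)$ are correct and essentially reproduce what the paper obtains by quoting a black-box SGD convergence theorem.

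The genuine gap is exactly where you flagged it. Your conditioning argument cannot close: from $\E\norm{\bv_T-\bw^*}^2\le (\log^2(\gamma^2 T)+2)/\gamma^2$ you only get polynomial tails, $\Pr(\norm{\bv_T-\bw^*}>R)\lesssim \log^2(\gamma^2 T)/(\gamma^2 R^2)$. To make the ``good'' event contribute at the target rate $T^{-\alpha}$ you need $e^{\tau R}\lesssim T^{(4/5)(1-\alpha)}$, i.e.\ $R\lesssim 4\log(T)/\gamma$; but then the ``bad'' event has probability $\Omega(1)$, and the trivial bound $\delta\le 1$ there swamps everything. A second-moment bound on $\norm{\bv_T-\bw^*}$ is simply too weak to tame the exponential factor $e^{\tau\norm{\bv_T}}$, and there is no obvious martingale concentration to upgrade it (the increments of $\norm{\bw_t-\bw^*}^2$ are not uniformly bounded a priori).

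The paper sidesteps this issue entirely by invoking a \emph{deterministic} norm bound: for SGD on the logistic loss with $\eta=1$, one has $\max_{t\in[T]}\norm{\bw_t}\le 2\log(T)/\gamma+2$ for every realization (this is quoted from \citet{ji2018risk}), and hence $\norm{\bv_T}\le 2\log(T)/\gamma+2$ by Jensen. With the norm bounded per-realization, the division by $\norm{\bv_T}$ in the Markov step is deterministic, and the only expectation needed is $\E[\hat{L}(\bv_T)]$---no product of random variables, no conditioning. That is the missing ingredient in your proposal, and it is precisely what justifies the constant $1/5$ rather than $1/2$: $\norm{\bv_T}\le 2+2\log(T)/\gamma$ together with $\ell^{-1}(z)\ge\log(1/2z)$ gives the factor $\frac{\log(T)}{\gamma+\log(T)}>\frac{2}{5}$ after choosing $p=2T^{1-\alpha}\Delta(T,\gamma)$.
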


\begin{proof}
	We will utilize the following easily-verified facts about the logistic loss: It is non-negative, its gradient is $\frac{1}{4}$-Lipschitz, and its inverse is $\ell^{-1}(z):=\log(1/(\exp(z)-1))$, which is between $\log(1/z)$ and $\log(1/2z)$ for all $z\in [0,1]$~.
	
	Let $\bw_0,\norm{\bw_0}=1$ be a max-margin separator, so that $\min_i \bx_i^\top \bw_0\geq \gamma$, and define (similarly to the proof of \thmref{thm:exp}) $\bw^*:=\frac{\ell^{-1}(\epsilon)}{\gamma}\bw_0$, where $\epsilon$ will be chosen later. It is easily verified that $\max_i \ell(\bx_i^\top \bw^*)\leq \epsilon$, and therefore $\hat{L}(\bw^*)\leq \epsilon$. 
	Since $\ell$ is non-negative and with a $\frac{1}{4}$-Lipschitz derivative, we can use Theorem 14.13 from \citet{shalev2014understanding} on the convergence of SGD for such losses to get
	\[
	\E[\hat{L}(\bv_T)]~\leq~\frac{1}{1- \frac{1}{4}}\left(\hat{L}(\bw^*)+\frac{\norm{\bw^*}^2}{2T}\right)~\leq~ \frac{4}{3}\left(\epsilon+\frac{(\ell^{-1}(\epsilon))^2}{2\gamma^2 T}\right)~.
	\]
	In particular, picking $\epsilon = \frac{1}{\gamma^2 T}$, we get 
	\[
	\E[\hat{L}(\bv_T)]~\leq~ \frac{4}{3\gamma^2 T}\left(1+\frac{(\ell^{-1}(1/\gamma^2 T))^2}{2}\right)
	~\leq~ \frac{4}{3\gamma^2 T}\left(1+\frac{1}{2}\log^2(\gamma^2 T)\right)~.
	\]
	To simplify notation, define $\Delta(T,\gamma)$ to be the expression in the right-hand side above. We also note that the left-hand side equals $\E_{\bv_T,i}[\ell(\bx_i^\top \bv_T)]$, where $i$ is uniformly distributed in $[m]$. Thus, by Markov's inequality, for any $p>0$, we have
	\[
	p~\geq~\Pr_{i,\bv_T}\left(\ell(\bx_i^\top\bv_T) \geq \frac{\Delta(T,\gamma)}{p}\right)~=~
	\Pr_{i,\bv_T}\left(\bx_i^\top\bv_T \leq \ell^{-1}\left( \frac{\Delta(T,\gamma)}{p}\right)\right)~.
	\]
	According to Theorem 2.1 in version 2 of \citet{ji2018risk}, the iterates of SGD on the logistic loss satisfy deterministically $\max_{t\in [T]}\norm{\bw_t}\leq \frac{2\log(T)}{\gamma}+2$, so by Jensen's inequality, $\norm{\bv_T}\leq\frac{1}{T}\sum_{t=1}^{T}\norm{\bw_t}\leq \frac{2\log(T)}{\gamma}+2$. Combining this with the displayed equation above,
	we get that
	\[
	p~\geq~ \Pr_{i,\bv_T}\left(\bx_i^\top\bar{\bv}_T \leq \frac{1}{2+2\log(T)/\gamma}\cdot\ell^{-1}\left( \frac{\Delta(T,\gamma)}{p}\right)\right)~.
	\]
	Choosing $p=2T^{1-\alpha}\Delta(T,\gamma)=\frac{8+4\log^2(\gamma^2 T)}{3\gamma^2 T^{\alpha}}$ for some $\alpha\in [0,1]$, and substituting into the above, we get that
	\begin{align*}
	\frac{8+4\log^2(\gamma^2 T)}{3\gamma^2 T^{\alpha}}~&\geq~\Pr_{i,\bv_T}\left(\bx_i^\top\bar{\bv}_T ~\leq~ \frac{1}{2+2\log(T)/\gamma}\cdot\ell^{-1}\left(\frac{T^{\alpha-1}}{2}\right)\right)\\
	&\geq~ \Pr_{i,\bv_T}\left(\bx_i^\top\bar{\bv}_T ~\leq~ \frac{1}{2+2\log(T)/\gamma}\cdot\log\left(T^{1-\alpha}\right)\right)\\
	&=~ \Pr_{i,\bv_T}\left(\bx_i^\top\bar{\bv}_T ~\leq~ \frac{(1-\alpha)\gamma}{2}\cdot\frac{\log(T)}{\gamma+\log(T)}\right)~.
	\end{align*}
	Since $T>\frac{1}{\gamma^2}\geq 1$, we have $\frac{\log(T)}{\gamma+\log(T)}\geq \frac{\log(T)}{1+\log(T)}\geq \frac{\log(2)}{1+\log(2)}> \frac{2}{5}$. Plugging into the above, we get
	\[
	\frac{8+4\log^2(\gamma^2 T)}{3\gamma^2 T^{\alpha}}~\geq~\Pr_{i,\bv_T}\left(\bx_i^\top\bar{\bv}_T ~\leq~ \frac{(1-\alpha)\gamma}{5}\right)~.
	\]
	To complete the proof, we note that if we let $A_i$ denote the event that $\bx_i^\top\bar{\bv}_T\leq \frac{(1-\alpha)\gamma}{5}$, and $\mathbf{1}_{A_i}$ the indicator function of the event $A_i$, then the above implies
	\[
	\frac{8+4\log^2(\gamma^2 T)}{3\gamma^2 T^{\alpha}}~\geq~\E_{i,\bv_T}[\mathbf{1}_{A_i}]~=~\E_{\bv_T}\E_i[\mathbf{1}_{A_i}]~=~\E_{\bv_T}\left[\frac{1}{m}\sum_{i=1}^{m}\mathbf{1}_{A_i}\right].
	\]
	Thus, letting $\delta=\frac{1}{m}\sum_{i=1}^{m}\mathbf{1}_{A_i}$, the theorem follows.
\end{proof}

Using this theorem, we get the following generalization error bound for SGD, which is a direct analogue of the error bounds we obtained for gradient flow and gradient descent:

\begin{theorem}\label{thm:sgdbound}
	Let $\Dcal$ be some distribution over $\{\bx:\norm{\bx}\leq 1\}\times \{-1,+1\}$, such that there exists a unit vector $\bw_0$ and $\gamma>0$ satisfying $\Pr_{(\bx,y)\sim \Dcal}(y\bx^\top \bw_0\geq \gamma)=1$. Suppose we sample $m$ points $(\bx_1,y_1),\ldots,(\bx_m,y_m)$ i.i.d. from $\Dcal$, and run SGD (with step sizes $\eta_t=1$) on $\bw\mapsto \frac{1}{m}\sum_{i=1}^{m}\ell(y_i \bx_i^\top \bw)$, where $\ell$ is the logistic loss. Then for any $T$, 
	\[
	\E\left[\Pr_{(\bx,y)\sim D}(\sign(\bx^\top \bv_T)\neq y)\right]~\leq~
	\tilde{\Ocal}\left(\frac{1}{\gamma^2 T}+\frac{1}{\gamma^2 m}\right)~,
	\]
	where the $\tilde{\Ocal}$ notation hides universal constants and factors polylogarithmic in $T,m,1/\gamma$, and the expectation is over the randomness of the SGD algorithm.
\end{theorem}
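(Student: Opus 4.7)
The plan is to repeat the proof of \thmref{thm:exp} almost verbatim, substituting \thmref{thm:marginquantilesSGD} for \thmref{thm:marginquantiles}. The only conceptual new point is that the fraction of training points on which the margin is small is now a random variable in the SGD randomness, so the combination step must be carried out in expectation rather than for a deterministic quantity.

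First I would invoke the standard margin-based generalization bound used in the proof of \thmref{thm:exp} (from \citet{mcallester2003simplified}): with probability at least $1-\delta_0$ over the sample, uniformly for every vector $\bw$ that attains margin at least $\hat{\gamma}$ on all but a fraction $p$ of the training points,
\[
\Pr_{(\bx,y)\sim \Dcal}(\sign(\bw^\top \bx)\neq y) ~\leq~ \tilde{\Ocal}\left(p + \frac{1}{\hat{\gamma}^2 m}\right),
\]
where the $\tilde{\Ocal}$ absorbs $\log(1/\delta_0)$. Once the sample is fixed, this inequality holds deterministically for every $\bw$, in particular for the random SGD iterate $\bv_T$. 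Next I would apply \thmref{thm:marginquantilesSGD}: for any $\alpha\in[0,1]$, the iterate $\bv_T$ attains margin at least $(1-\alpha)\gamma/5$ on all but a random fraction $\delta(\alpha)$ of the sample, with $\E_{SGD}[\delta(\alpha)]\leq \tilde{\Ocal}(1/(\gamma^2 T^\alpha))$. Plugging $p=\delta(\alpha)$ and $\hat{\gamma}=(1-\alpha)\gamma/5$ into the generalization bound and taking the expectation over the SGD randomness, linearity of expectation yields
\[
\E_{SGD}\left[\Pr_{\Dcal}(\sign(\bv_T^\top \bx)\neq y)\right] ~\leq~ \tilde{\Ocal}\left(\frac{1}{\gamma^2 T^{\alpha}} + \frac{1}{(1-\alpha)^2\gamma^2 m}\right),
\]
which is the direct analogue of \eqref{eq:gen1}.

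From here I would optimize $\alpha$ exactly as in the proof of \thmref{thm:exp}: take $\alpha=1/2$ when $\gamma^2 T$ is large compared to $(\gamma^2 m)^2$, and $\alpha=1-1/(2\log(\gamma^2 m))$ otherwise, using the identity $z^{1/\log z}=e$ together with $\gamma\leq 1$ to absorb the $(1-\alpha)^{-2}$ factor and the residual $T^{1-\alpha}$ factor into the polylog. To handle the bookkeeping associated with the two randomness sources (sample in high probability, SGD in expectation), I would take $\delta_0$ to be a small constant such as $1/m$, so that the $\delta_0$-failure event contributes at most an $\Ocal(1/m)$ additive term that is dominated by $\tilde{\Ocal}(1/\gamma^2 m)$; the expectation in the statement of \thmref{thm:sgdbound} can then be interpreted either as expectation over SGD conditioned on the high-probability sample, or as joint expectation over both sources, with the same final bound.

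The main obstacle is almost entirely bookkeeping rather than technique: no new probabilistic ingredient is needed beyond \thmref{thm:marginquantilesSGD} and the off-the-shelf margin generalization bound. The one genuine subtlety to verify is that the $\alpha$-optimization is insensitive to the slight worsening in \thmref{thm:marginquantilesSGD} compared to its gradient-flow counterpart (a $\gamma^2 T^\alpha$ rather than $(\gamma^2 T)^\alpha$ factor inside the denominator); this is harmless because $\gamma\leq 1$ and the resulting extra factor is polylog in $1/\gamma$, which is permitted by the $\tilde{\Ocal}$ notation declared in the theorem statement.
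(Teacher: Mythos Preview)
Your approach is essentially identical to the paper's: invoke \thmref{thm:marginquantilesSGD} in place of \thmref{thm:marginquantiles}, take expectations over the SGD randomness, and optimize over $\alpha$ as in \thmref{thm:exp}. One small correction to your last paragraph: the discrepancy between $1/(\gamma^2 T^{\alpha})$ and $1/(\gamma^2 T)^{\alpha}$ is a factor of $\gamma^{-2(1-\alpha)}$, which for $\alpha=1/2$ equals $1/\gamma$, not a polylog in $1/\gamma$. The paper handles this by shifting the case split to $T\gtrless m^2$ (rather than $\gamma^2 T\gtrless(\gamma^2 m)^2$) and taking $\alpha=1-1/\log(m^2)$ in the second case; with these thresholds the residual factor is $(m^2)^{1/\log(m^2)}=e$, and no stray $1/\gamma$ appears.
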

\begin{proof}
	Using identical arguments as in the proof of \thmref{thm:exp} (except using \thmref{thm:marginquantilesSGD} instead of \thmref{thm:marginquantiles}, and the fact that high-probability bounds imply a bound on the expectation), we get that
	\begin{equation}\label{eq:gensgd}
	\E\left[\Pr_{(\bx,y)\sim D}(\sign(\bx^\top \bw(T))\neq y)\right] ~\leq~ \tilde{\Ocal}\left(\frac{1}{\gamma^2 T^\alpha}+ \frac{1}{(1-\alpha)^2\gamma^2 m}\right)~.
	\end{equation}
	This holds for any $\alpha\in [0,1]$. In particular, if $T\geq m^2$, pick $\alpha=1/2$, in which case \eqref{eq:gensgd} is at most
		\[
		\tilde{\Ocal}\left(\frac{1}{\gamma^2 T^{1/2}}+ \frac{1}{\gamma^2 m}\right)~=~
		\tilde{\Ocal}\left(\frac{1}{\gamma^2 m}\right)~,
		\]
	and if $T < m^2$, pick $\alpha=1-\frac{1}{\log( m^2)}$, in which case \eqref{eq:gensgd} is at most
		\[
		\tilde{\Ocal}\left(\frac{T^{1/\log(m^2)}}{\gamma^2 T}+ \frac{1}{\gamma^2 m}\right)
		~\leq~
		\tilde{\Ocal}\left(\frac{(m^2)^{1/\log(m^2)}}{\gamma^2 T}+ \frac{1}{\gamma^2 m}\right)~=~\tilde{\Ocal}\left(\frac{1}{\gamma^2 T}+\frac{1}{\gamma^2 m}\right)~.
		\]
	Combining the two cases, the result follows.
\end{proof}

Finally, we remark that \thmref{thm:sgdbound} only bounds the expectation of the error probability of $\bv_T$. It is likely that using more sophisticated concentration tools, one can obtain a high-probability bound. However, this would require a more involved analysis, and is left for future work.

\section{Tightness}\label{sec:tightness}

In the previous sections, we provided bounds on the margin behavior of gradient flow, gradient descent and SGD, which all have the following form (ignoring constants and log factors): After $T$ iterations, we get a predictor which achieves $\Omega(\gamma)$ margin on at least $(1-(\gamma^2 T)^{-\alpha})m$ of the $m$ data points, where $\alpha\in (0,1)$ is a constant arbitrarily close to one. It is natural to ask whether this result can be improved.

In this section, we show that this result is essentially tight: After $T$ iterations, it is impossible to guarantee any positive margin on more than $(1-p)m$ of the points, when $p$ is much less than $(\gamma^2 T)^{-1}$. For concreteness, we prove this for gradient descent and the logistic loss, although the analysis can be extended to other gradient methods and losses:

\begin{theorem}\label{thm:lowbound}
	For any positive integers $m,T$ and any $\gamma\in \Big(0,\frac{1}{8}\Big]$, there exists a dataset of $m$ points in $\reals^2$ satisfying assumption \ref{assump:separable}, such that gradient descent using the logistic loss and any step size $\eta\leq 1$ must satisfy $\bx_i^\top \bw_T\leq 0$ for at least $\lfloor\frac{m}{26\max\{1,\gamma^2 T\}}\rfloor$ of the data points.
\end{theorem}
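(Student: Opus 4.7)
The construction I would use is a two-point dataset in $\mathbb{R}^2$: place $m-k$ copies of $\bx_+ := (\gamma, \sqrt{1-\gamma^2})$ and $k := \lfloor m/(26\max\{1,\gamma^2 T\})\rfloor$ copies of $\bx_- := (\gamma, -\sqrt{1-\gamma^2})$. The unit vector $(1,0)$ witnesses Assumption \ref{assump:separable} with margin exactly $\gamma$, and $\|\bx_\pm\|=1$. Since the $k$ bad points coincide, it suffices to show $\bx_-^\top\bw_T \leq 0$, in which case all $k$ of them are classified on the wrong side of the origin.

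Because $\bw_0=\mathbf{0}$ and each GD step adds a non-negative combination of $\bx_+,\bx_-$, the iterates lie in their span. Writing $p_\pm(t) := \sigma(-\bx_\pm^\top\bw_t)$, $S := \sum_{t=0}^{T-1} p_-(t)$, $S_+ := \sum_{t=0}^{T-1} p_+(t)$, and using $\bx_+\cdot\bx_-=2\gamma^2-1$, summing the updates gives the two scalar identities $\bx_+^\top\bw_T = (\eta/m)[(m-k)S_+ - k(1-2\gamma^2)S]$ and $\bx_-^\top\bw_T = (\eta/m)[kS - (m-k)(1-2\gamma^2)S_+]$. Eliminating $S_+$ yields the central identity
\[
\bx_-^\top\bw_T = \frac{4\eta\gamma^2(1-\gamma^2)k}{m}\,S \;-\; (1-2\gamma^2)\,\bx_+^\top\bw_T.
\]
Using only $S\leq T$ (from $p_-\leq 1$) and the choice $k\leq m/(26\max\{1,\gamma^2 T\})$ shows the first term is at most $2\eta(1-\gamma^2)/13$ in \emph{both} regimes ($\gamma^2 T\leq 1$ and $\gamma^2 T>1$), so the claim reduces to proving the single lower bound $\bx_+^\top\bw_T \geq 2\eta(1-\gamma^2)/(13(1-2\gamma^2))$, which is strictly less than $6\eta/13$ for any $\gamma\leq 1/8$ (the arithmetic boils down to $\gamma^2 < 2/5$).

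The technical heart is proving $\bx_+^\top\bw_t \geq 6\eta/13$ for all $t\geq 1$, which I would do via a monotonicity argument that, crucially, requires no hypothesis on the trajectory of $\bx_-^\top\bw_t$. First, a direct computation with $p_\pm(0)=1/2$ yields $\bx_+^\top\bw_1 = (\eta/(2m))[m-2k(1-\gamma^2)] \geq 6\eta/13$, since $k\leq m/26$. Second, define $Z^* := \log\!\bigl(\tfrac{m-k}{k(1-2\gamma^2)}-1\bigr)\geq \log 24 > 3$. Using only $p_-(t)\leq 1$, one checks that (a) $\bx_+^\top\bw_{t+1} \geq \bx_+^\top\bw_t$ whenever $\bx_+^\top\bw_t\leq Z^*$, so $\bx_+^\top\bw_t$ is non-decreasing (hence $\geq 6\eta/13$) until it first exceeds $Z^*$; and (b) the magnitude of any downward step is at most $\eta k(1-2\gamma^2)/m \leq \eta/26$, so once beyond $Z^*$, $\bx_+^\top\bw_t$ can never drop below $Z^*-\eta/26 > 6\eta/13$. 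Combining these two phases gives the required lower bound for every $t\geq 1$.

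Substituting back into the central identity yields $\bx_-^\top\bw_T \leq \eta(10\gamma^2-4)/13 < 0$ for $\gamma\leq 1/8$, so every one of the $k$ copies of $\bx_-$ satisfies $\bx_i^\top\bw_T < 0$, establishing the theorem (with noticeable slack in the constant $26$). The main subtlety I anticipate is handling the discrete overshoot near $Z^*$: the trajectory can exceed $Z^*$ by up to $\eta/2$ in a single jump and then drift down step by step. The argument above sidesteps this by observing that the per-step downward drift is uniformly bounded by $\eta/26$ and that growth resumes immediately once $\bx_+^\top\bw_t$ re-enters $[0,Z^*]$, so one never needs any control on $\bx_-^\top\bw_t$ itself to run the monotonicity argument.
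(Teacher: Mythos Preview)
Your argument is correct but follows a genuinely different route from the paper. The paper places the majority at $(1,0)$ and the minority at $(-\tfrac{1}{2},3\gamma)$, with separator $(\gamma,\tfrac{1}{2})$, and then tracks the two coordinates $(r_t,s_t)$ directly through a two-phase lemma (\lemref{lem:lowboundtech}): first $r_t$ climbs to $\approx 1$ in $\Ocal(1/\eta)$ steps while the minority margin stays $\leq 0$, and then one counts how many additional steps are needed for $s_t$ (which grows by at most $3\gamma\epsilon\eta$ per step) to make the minority margin positive. Your construction is instead symmetric, $(\gamma,\pm\sqrt{1-\gamma^2})$, and your key device is the exact linear identity $\bx_-^\top\bw_T = \tfrac{4\eta\gamma^2(1-\gamma^2)k}{m}S - (1-2\gamma^2)\,\bx_+^\top\bw_T$, which cleanly separates the two contributions and reduces everything to (i) the trivial bound $S\leq T$ and (ii) a single scalar lower bound on the majority margin, handled by monotonicity below a threshold $Z^*$ plus a uniform $\eta/26$ bound on any decrement. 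What this buys you is a shorter and more modular argument: there is no need to delimit an initial phase, and you never inspect $\bx_-^\top\bw_t$ along the way. The paper's approach, by contrast, yields a more explicit description of the trajectory (and the picture in Figure~\ref{fig:lowbound}). One small remark: the paper's indexing starts from $\bw_1=\mathbf{0}$ rather than $\bw_0=\mathbf{0}$, so your bound formally covers the paper's $\bw_{T+1}$; since you establish $\bx_+^\top\bw_t\geq 6\eta/13$ for all $t\geq 1$ and the $T=1$ case ($\bw_1=\mathbf{0}$) is trivial, this is only a cosmetic discrepancy.
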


Since $\bx_i^\top\bw_T\leq 0$ translates to $\ell(\bx_i^\top\bw_T)\geq \Omega(1)$, the theorem also implies that the $\tilde{\Ocal}(1/\gamma^2 T)$ bounds on the empirical risk shown earlier are tight up to logarithmic factors. It also implies that the percentage of misclassified points ($\bx_i^\top \bw_T\leq 0$) cannot decrease at a better rate. In addition, the lower bound implies that if we want to get any positive margin on all $m$ data points, the number of iterations $T$ must be at least $\Omega(m/\gamma^2)$ in the worst case.

A lower bound related to ours appears in \citet{ji2019refined}, where the authors show that if $\bw^*$ is the max-margin unit-norm predictor, then  $\norm{\bar{\bw}_T-\bw^*}\geq\Omega(\log(m)/\log(T))$. This translates to a $T\geq \Omega(m)$ requirement to get a non-trivial guarantee on the direction of $\bar{\bw}_T$. However, this is a somewhat different objective than ours, and moreover, their lower bound does not specify the dependence on the margin parameter $\gamma$.

\begin{proof}[Proof of \thmref{thm:lowbound}]
	
	We will assume without loss of generality that $T\geq \frac{1}{\gamma^2}$ (or equivalently, $\gamma^2 T\geq 1$), in which case the lower bound we need to prove is $\lfloor\frac{m}{26\gamma^2 T}\rfloor$. Otherwise, if $T<\frac{1}{\gamma^2}$, we can simply apply the construction below with the larger margin parameter $\hat{\gamma}:=\frac{1}{\sqrt{T}}$, which by definition satisfies $T\geq \frac{1}{\hat{\gamma}^2}$, and uses a dataset separable with margin $\hat{\gamma}\geq \gamma$, so assumption \ref{assump:separable} still holds with margin parameter $\gamma$. 
	
	We will also assume without loss of generality that $\lfloor\frac{m}{26\gamma^2 T}\rfloor>0$ (otherwise the theorem trivially holds), and fix $\epsilon:=\frac{1}{m}\left\lfloor\frac{m}{26\gamma^2 T}\right\rfloor$ (which is positive and less than $\frac{1}{26\gamma^2 T}\leq \frac{1}{26}$). 
	
	\begin{figure}[t]
	\centering
	\includegraphics[trim=4cm 9cm 4cm 9.1cm,clip=true,scale=0.8]{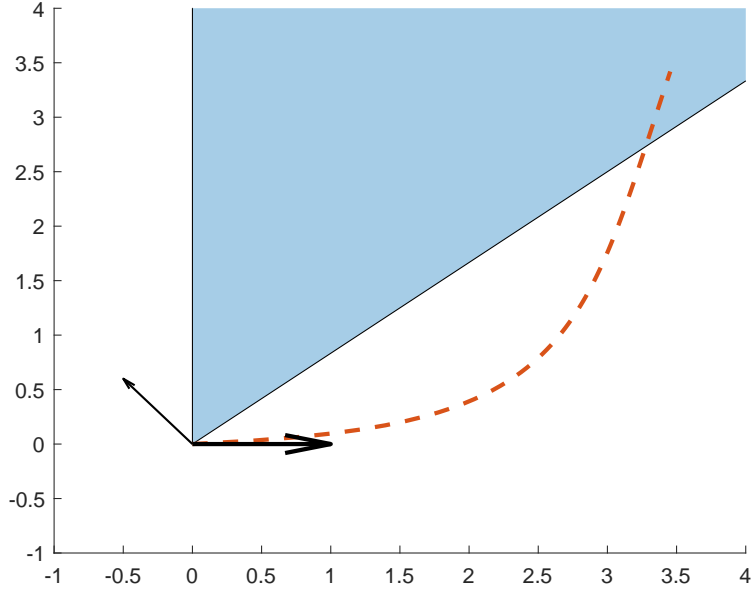}
	\caption{Illustration of the proof of \thmref{thm:lowbound}, for $\gamma=\frac{1}{5}$. The dataset consists of 90\% points at $(1,0)$ (thick black arrow) and 10\% points at $(-1/2,3\gamma)$ (thin black arrow). The gradient descent trajectory (with $\eta=1$, starting from the origin) is the dotted red line, and the shaded blue region are the vectors which achieves a positive margin on more than 90\% of the data points (or equivalently for this construction, get a positive margin on all data points). Initially, the gradient descent trajectory is mostly influenced by the points at $(1,0)$, and only once a sufficiently large margin is achieved on them, the influence of the few points at $(-1/2,3\gamma)$ begins to manifest, and the trajectory curves towards the blue region. Best viewed in color.}
	\label{fig:lowbound}
\end{figure}
	
	Consider a dataset consisting of a majority group of $(1-\epsilon)m$ points of the form $(1,0)$, and a minority group of $\epsilon m$ points of the form $\left(-\frac{1}{2},3\gamma\right)$ (see Figure \ref{fig:lowbound} for a sketch of the construction and proof idea). It is easily verified that all these points are contained in the unit ball, and that the vector $\bw=\left(\gamma,\frac{1}{2}\right)$ satisfies $\bx^\top \bar{\bw}\geq \bx^\top \bw\geq \gamma$ for any $\bx$ in the dataset. Thus, assumption \ref{assump:separable} holds. Moreover, the empirical risk function equals
	\[
	\hat{L}(\bw)~=~ \hat{L}(w(1),w(2))~=~\left(1-\epsilon\right)\cdot\ell(w(1))+\epsilon\cdot \ell\left(-\frac{1}{2}\cdot w(1)+3\gamma\cdot w(2)\right)~,
	\]
	where $\ell(z):=\log(1+\exp(-z))$ is the logistic loss. 
	
	Suppose by contradiction that $\bx^\top \bw_T\leq 0$ for less than $\epsilon m$ of the points $\bx$ in the dataset. Since there are $\epsilon m$ points of the form $\bx=\left(-\frac{1}{2},3\gamma\right)$, this implies that all these points are positively correlated with $\bw_T$, that is $\bx^\top \bw_T=-\frac{1}{2}\cdot w_T(1)+3\gamma\cdot w_T(2)> 0$. The lemma below implies that this cannot happen unless $T\geq \frac{19}{480\gamma^2 \epsilon}$. But since $\frac{19}{480\gamma^2 \epsilon}\geq \frac{19}{480\gamma^2 (1/26\gamma^2 T)}>T$, we get $T>T$, a contradiction. 
	
\begin{lemma}\label{lem:lowboundtech}
	Fix some $\epsilon$ and $\gamma$ in $\Big(0,\frac{1}{8}\Big]$. Consider gradient descent on the bivariate function
	\[
	\hat{L}(r,s):=~(1-\epsilon)\cdot\ell(r)+\epsilon\cdot\ell\left(-\frac{1}{2}r+3\gamma s\right)~,
	\]
	where $\ell$ is the logistic loss, using any step size $\eta\leq 1$, starting from $r_1=s_1=0$ and producing a sequence of iterates $(r_2,s_2),(r_3,s_3),\ldots$. Then if $-\frac{1}{2}r_t+3\gamma s_t>0$, we must have $t\geq \frac{19}{480\gamma^2 \epsilon}$. 
\end{lemma}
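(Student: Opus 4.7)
The plan is to parameterize the iterates by $g_k := 1/(1+e^{r_k})$ and $h_k := 1/(1+e^{z_k})$ with $z_k := -r_k/2 + 3\gamma s_k$, so that the gradient descent steps take the clean form $r_{k+1}-r_k = \eta(1-\epsilon)g_k - (\eta\epsilon/2)h_k$ and $s_{k+1}-s_k = 3\eta\gamma\epsilon h_k$. Starting from $(r_1,s_1)=(0,0)$ and summing up to $k=t-1$, with $G_t := \sum_{k<t}g_k$ and $H_t := \sum_{k<t}h_k$, two telescoping identities emerge: $r_t = \eta(1-\epsilon)G_t - (\eta\epsilon/2)H_t$ and $s_t = 3\eta\gamma\epsilon H_t$. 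Plugging the $s_t$ identity into the definition of $z_t$ gives $z_t = -r_t/2 + 9\eta\gamma^2\epsilon H_t$, so the hypothesis $z_t>0$ is equivalent to
\[
r_t ~<~ 18\eta\gamma^2\epsilon H_t ~\leq~ 18\gamma^2\epsilon\,t,
\]
using $\eta\leq 1$ and $H_t\leq t-1$.

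The proof then reduces to showing that $z_t>0$ also forces $r_t\geq 1$, because combining with the displayed bound gives $1 < 18\gamma^2\epsilon\,t$, i.e., $t > 1/(18\gamma^2\epsilon) \geq 19/(480\gamma^2\epsilon)$ (since $480 > 18\cdot 19$). I would prove $r_t\geq 1$ by contrapositive: assume $r_t<1$ and conclude $z_t\leq 0$. A backward induction first shows $r_k < 1$ for every $k\leq t$. From $h_k\leq 1$ we have $r_{k+1}-r_k \geq -\eta\epsilon/2 \geq -1/16$, so if $r_{k+1}<1$ then $r_k < 1 + 1/16 < 2$. For such $r_k$ the inequality $e^{r_k} < e^2 < 13 \leq (2-3\epsilon)/\epsilon$ (using $\epsilon\leq 1/8$) is precisely the condition $(1-\epsilon)/(1+e^{r_k})\geq \epsilon/2$ that makes the $r$-step non-negative, so $r_k \leq r_{k+1} < 1$.

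Given $r_k<1$ on $\{1,\ldots,t-1\}$, we have $g_k \geq 1/(1+e) > 1/4$ and hence $G_t > (t-1)/4$. On the other hand, summing the $z$-update directly, $z_{k+1}-z_k = -(\eta(1-\epsilon)/2)g_k + \eta\epsilon(1/4+9\gamma^2)h_k$, so $z_t>0$ is equivalent to $G_t < \frac{2\epsilon(1/4+9\gamma^2)}{1-\epsilon}H_t$; using $\gamma,\epsilon\leq 1/8$ the coefficient is at most $\frac{2(1/8)(25/64)}{7/8} = 25/224 < 1/8$, yielding $G_t < (t-1)/8$. These two bounds on $G_t$ are contradictory, and therefore $z_t\leq 0$, completing the argument.

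The main obstacle is this backward induction: one must verify that the iterates cannot drop out of the regime where the majority-driven positive drift on $r$ dominates the small, persistent negative contribution from the minority group. The assumption $\epsilon\leq 1/8$ is calibrated precisely so that $e^{r_k} < e^2 < 13 \leq (2-3\epsilon)/\epsilon$ throughout this regime, leaving just enough slack in the constants for the induction to close.
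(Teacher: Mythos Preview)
Your proof is correct and takes a genuinely different route from the paper's. The paper proceeds in two stages via an auxiliary lemma: it first identifies a ``warm-up'' index $T\le 32/(5\eta)+1$ at which $r_t$ has climbed above $15/16$ while $s_T\le 3/10$ and $z_t\le 0$ throughout $t\le T$; it then bounds from below the additional number of steps needed for $s_t$ to grow past $r_t/(6\gamma)\ge 15/(96\gamma)$, using the per-step increment bound $s_{t+1}-s_t\le 3\gamma\epsilon$. Your argument avoids this two-phase structure entirely by telescoping the updates into the closed-form identities $r_t=\eta(1-\epsilon)G_t-\tfrac{\eta\epsilon}{2}H_t$ and $s_t=3\eta\gamma\epsilon H_t$, which immediately reduce $z_t>0$ to $r_t<18\eta\gamma^2\epsilon H_t$. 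The backward induction showing $r_t<1\Rightarrow r_k<1$ for all $k\le t$ (by first getting $r_k<2$ from the one-step decrement bound and then using $e^{r_k}<e^2<13\le(2-3\epsilon)/\epsilon$ to force monotonicity) is a clean substitute for the paper's forward analysis of the warm-up phase. Your approach is shorter, avoids the auxiliary lemma, and in fact yields the stronger threshold $t>1/(18\gamma^2\epsilon)$, comfortably implying the stated $t\ge 19/(480\gamma^2\epsilon)$. The paper's argument, by contrast, tracks the dynamics more explicitly and would adapt more readily if one wanted finer control over the trajectory (e.g., a lower bound on $r_t$ valid for all large $t$, not just at the first time $z_t>0$).
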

The proof of the lemma is based on a technical calculation, and appears in Appendix \ref{app:proofs}. 

\end{proof}

\subsection*{Acknowledgements}

This research is supported in part by European Research Council (ERC) grant 754705. We thank Matus Telgarsky and Ronen Eldan for very helpful discussions.

\bibliographystyle{plainnat}
\bibliography{bib}

\begin{thebibliography}{13}
\providecommand{\natexlab}[1]{#1}
\providecommand{\url}[1]{\texttt{#1}}
\expandafter\ifx\csname urlstyle\endcsname\relax
  \providecommand{\doi}[1]{doi: #1}\else
  \providecommand{\doi}{doi: \begingroup \urlstyle{rm}\Url}\fi

\bibitem[Dudik et~al.(2020)Dudik, Ji, Schapire, and
  Telgarsky]{Dudik2020gradient}
Miroslav Dudik, Ziwei Ji, Robert Schapire, and Matus Telgarsky.
\newblock Gradient descent follows the regularization path for general losses.
\newblock In \emph{Conference on Learning Theory}, 2020.

\bibitem[Gunasekar(2018)]{gunasekar2018characterizing}
Suriya Gunasekar.
\newblock Characterizing implicit bias in terms of optimization geometry.
\newblock In \emph{In International Conference on Machine Learning}, 2018.

\bibitem[Gunasekar et~al.(2018)Gunasekar, Lee, Soudry, and
  Srebro]{gunasekar2018implicit}
Suriya Gunasekar, Jason~D Lee, Daniel Soudry, and Nati Srebro.
\newblock Implicit bias of gradient descent on linear convolutional networks.
\newblock In \emph{Advances in Neural Information Processing Systems}, pages
  9461--9471, 2018.

\bibitem[Ji and Telgarsky(2018{\natexlab{a}})]{ji2018gradient}
Ziwei Ji and Matus Telgarsky.
\newblock Gradient descent aligns the layers of deep linear networks.
\newblock In \emph{International Conference on Learning Representations},
  2018{\natexlab{a}}.

\bibitem[Ji and Telgarsky(2018{\natexlab{b}})]{ji2018risk}
Ziwei Ji and Matus Telgarsky.
\newblock Risk and parameter convergence of logistic regression.
\newblock \emph{arXiv preprint arXiv:1803.07300}, 2018{\natexlab{b}}.

\bibitem[Ji and Telgarsky(2019{\natexlab{a}})]{ji2019polylogarithmic}
Ziwei Ji and Matus Telgarsky.
\newblock Polylogarithmic width suffices for gradient descent to achieve
  arbitrarily small test error with shallow relu networks.
\newblock In \emph{International Conference on Learning Representations},
  2019{\natexlab{a}}.

\bibitem[Ji and Telgarsky(2019{\natexlab{b}})]{ji2019refined}
Ziwei Ji and Matus Telgarsky.
\newblock A refined primal-dual analysis of the implicit bias.
\newblock \emph{arXiv preprint arXiv:1906.04540}, 2019{\natexlab{b}}.

\bibitem[McAllester(2003)]{mcallester2003simplified}
David McAllester.
\newblock Simplified pac-bayesian margin bounds.
\newblock In \emph{Learning theory and Kernel machines}, pages 203--215.
  Springer, 2003.

\bibitem[Nacson et~al.(2019{\natexlab{a}})Nacson, Lee, Gunasekar, Savarese,
  Srebro, and Soudry]{nacson2019convergence}
Mor~Shpigel Nacson, Jason Lee, Suriya Gunasekar, Pedro Henrique~Pamplona
  Savarese, Nathan Srebro, and Daniel Soudry.
\newblock Convergence of gradient descent on separable data.
\newblock In \emph{The 22nd International Conference on Artificial Intelligence
  and Statistics}, pages 3420--3428, 2019{\natexlab{a}}.

\bibitem[Nacson et~al.(2019{\natexlab{b}})Nacson, Srebro, and
  Soudry]{nacson2019stochastic}
Mor~Shpigel Nacson, Nathan Srebro, and Daniel Soudry.
\newblock Stochastic gradient descent on separable data: Exact convergence with
  a fixed learning rate.
\newblock In \emph{The 22nd International Conference on Artificial Intelligence
  and Statistics}, pages 3051--3059, 2019{\natexlab{b}}.

\bibitem[Nesterov(2018)]{nesterov2018lectures}
Yurii Nesterov.
\newblock \emph{Lectures on convex optimization}, volume 137.
\newblock Springer, 2018.

\bibitem[Shalev-Shwartz and Ben-David(2014)]{shalev2014understanding}
Shai Shalev-Shwartz and Shai Ben-David.
\newblock \emph{Understanding machine learning: From theory to algorithms}.
\newblock Cambridge university press, 2014.

\bibitem[Soudry et~al.(2018)Soudry, Hoffer, Nacson, Gunasekar, and
  Srebro]{soudry2018implicit}
Daniel Soudry, Elad Hoffer, Mor~Shpigel Nacson, Suriya Gunasekar, and Nathan
  Srebro.
\newblock The implicit bias of gradient descent on separable data.
\newblock \emph{The Journal of Machine Learning Research}, 19\penalty0
  (1):\penalty0 2822--2878, 2018.

\end{thebibliography}

\appendix

\section{Polynomially-Tailed Losses}\label{app:poly}

In our paper, we focused mostly on the exponential and logistic losses, which both have exponentially decaying tails. \citet{nacson2019convergence} showed that such a tail is important to get asymptotic convergence to a max-margin solution: If $\ell(z)$ decays polynomially with $z$, then the convergence may not be to a max-margin solution. 

Despite this, in the spirit of our previous results, one may still conjecture that we converge to some large-margin solution (though not a max-margin one). Recently,  \citet[Remark 8]{Dudik2020gradient} showed this holds, albeit in a weak sense: If the loss $\ell(z)$ decays as $z^{-b}$ for some $b>0$, then the asymptotic margin over the dataset is  $\Omega(m^{-1/(b+1)})$, and this is tight in general. 

Using our techniques, we can provide the following more refined margin bound:
\begin{theorem}
	Under assumptions \ref{assump:separable}, \ref{assump:loss}, \ref{assump:smooth}, let $\ell$ be a loss satisfying $\ell(z)=z^{-b}$ for all $z\geq 1$. Let $\bw$ be some point reached by gradient descent, such that $\hat{L}(\bw)=\epsilon$ for some $\epsilon \in (0,\ell(0))$. Then for any $p\in [\epsilon,1]$, it holds that for at least $(1-p)m$ of the indices $i\in [m]$ that
	\[
	\bx_i^\top \bar{\bw}~>~ \frac{\gamma}{2}\cdot p^{1/b}~.
	\] 
\end{theorem}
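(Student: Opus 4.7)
The plan is to imitate the proof of \thmref{thm:keygd} essentially verbatim, since the only place where the shape of the loss enters is in evaluating $\ell^{-1}$. In detail, I would pick a unit max-margin separator $\bw_0$ (so $\min_i \bx_i^\top \bw_0 \geq \gamma$), define the reference vector $\bw^* := (\ell^{-1}(\epsilon)/\gamma)\bw_0$, and observe that since $\ell^{-1}(\epsilon) \geq 0$ and $\ell$ is monotonically decreasing, $\ell(\bx_i^\top \bw^*) \leq \ell(\ell^{-1}(\epsilon)) = \epsilon$ for every $i$, hence $\hat{L}(\bw^*) \leq \epsilon = \hat{L}(\bw)$. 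Assumption~\ref{assump:smooth} then lets me apply \lemref{lem:keygd} to conclude $\norm{\bw} \leq 2\norm{\bw^*} = 2\ell^{-1}(\epsilon)/\gamma$. Applying Markov's inequality to $\E_i[\ell(\bx_i^\top\bw)] = \epsilon$ gives that, for at least $(1-p)m$ of the indices $i$,
\[
\bx_i^\top \bar{\bw} ~\geq~ \frac{\ell^{-1}(\epsilon/p)}{\norm{\bw}} ~\geq~ \frac{\gamma}{2}\cdot \frac{\ell^{-1}(\epsilon/p)}{\ell^{-1}(\epsilon)},
\]
which is the same generic bound delivered by \thmref{thm:keygd}.

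The one substantive step that differs is evaluating that ratio under the polynomial-tail hypothesis, and this is precisely why $p$ is restricted to $[\epsilon, 1]$: for such $p$ we have $\epsilon/p \in (0,1]= (0,\ell(1)]$, so both $\ell^{-1}(\epsilon/p)$ and $\ell^{-1}(\epsilon)$ fall in $[1,\infty)$, the region where the hypothesis $\ell(z) = z^{-b}$ applies. There $\ell$ is explicitly invertible with $\ell^{-1}(y) = y^{-1/b}$, so
\[
\frac{\ell^{-1}(\epsilon/p)}{\ell^{-1}(\epsilon)} ~=~ \frac{(\epsilon/p)^{-1/b}}{\epsilon^{-1/b}} ~=~ p^{1/b},
\]
and the displayed bound collapses to $\bx_i^\top \bar{\bw} > (\gamma/2)\, p^{1/b}$, as claimed.

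I do not expect any real obstacle: essentially the whole argument is a single substitution into a template already proved. The only minor point to keep track of is that the lower endpoint of the allowed range for $p$ tightens from $\epsilon/\ell(0)$ (as in \thmref{thm:keygd}) to $\epsilon$, because the closed-form expression for $\ell^{-1}$ is only guaranteed on $(0,1]$ under the stated hypothesis; extending to smaller $p$ would require additional assumptions on the behavior of $\ell$ on $[0,1)$, which the theorem deliberately does not impose. It is also worth noting that $\bw \neq \mathbf{0}$ is implicit since $\hat{L}(\bw)=\epsilon < \ell(0) = \hat{L}(\mathbf{0})$, so that normalizing to $\bar{\bw}$ is well-defined.
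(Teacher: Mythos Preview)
Your proposal is correct and takes essentially the same approach as the paper: invoke the generic margin bound of \thmref{thm:keygd} and then substitute $\ell^{-1}(z)=z^{-1/b}$ on $(0,1]$ to simplify the ratio $\ell^{-1}(\epsilon/p)/\ell^{-1}(\epsilon)$ to $p^{1/b}$. The only cosmetic difference is that the paper cites \thmref{thm:keygd} as a black box rather than unpacking its proof inline, and explicitly records $\ell(0)\geq\ell(1)=1$ to justify that $[\epsilon,1]\subseteq[\epsilon/\ell(0),1]$; otherwise the arguments coincide.
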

\begin{proof}
	By definition, $\ell^{-1}(z) = z^{-1/b}$ for any $z\in (0,1]$. Plugging this into \thmref{thm:keygd}, and noting that $\ell(0)\geq \ell(1)=1$, it follows that for any $p\in [\epsilon,1]$, for at least $(1-p)m$ of the indices $i\in [m]$,
	\[
	\bx_i^\top \bar{\bw}~>~ \frac{\gamma}{2}\cdot \frac{(\epsilon/p)^{-1/b}}{\epsilon^{-1/b}}~=~\frac{\gamma}{2}\cdot p^{1/b}~.
	\] 	
\end{proof}

Assuming gradient descent converges to a globally minimal value, we have $\epsilon\rightarrow 0$. As a result, after sufficiently many iterations, we can pick $p$ slightly below $1/m$ (say $1/2m$), and get a margin bound of the form $\Omega(\gamma\cdot m^{-1/k})$ on all of the training examples (since getting a margin violation on less than $1/m$ of $m$ examples is equivalent to no margin violations). This essentially recovers the result of \citet{Dudik2020gradient} (up to the small difference of having $b$ in the exponent instead of $b+1$). However, our theorem covers a wider regime, since we can pick other values of $p$. For example, by picking $p$ to be any constant, the theorem implies that we attain $\Omega(\gamma)$ margin on a constant portion of the training data (which can be arbitrarily close to $1$). 

Finally, we note that by upper bounding $\epsilon$ as a function of the number of iterations, the theorem above can be used to quantify how many iterations are required to achieve a certain margin on a certain percentage of the data points, as well as derive margin-based generalization bounds, in a manner completely analogous to the results in \secref{sec:gdsgd}. Since polynomially-tailed losses are not commonly used in practice, we do not further pursue this here.

\section{Additional Proofs}\label{app:proofs}

\subsection{Proof of \lemref{lem:keygd}}\label{app:lemkeygd}

	We first argue that $\hat{L}(\bw_t)$ is monotonically decreasing in $t$. To see this, note that by the assumption that the derivative of $\ell$ (and hence the gradient of $\hat{L}$) is $\mu$-Lipschitz, we have
	\begin{align*}
	\hat{L}(\bw_{t+1})~&=~ \hat{L}(\bw_t-\eta_t\nabla \hat{L}(\bw_t))~\leq~
	\hat{L}(\bw_t)-\eta_{t}\norm{\nabla \hat{L}(\bw_t)}^2+\frac{\mu}{2}\eta_{t}^2\norm{\nabla \hat{L}(\bw_t)}^2\\
	&=~\hat{L}(\bw_t)-\eta_{t}\norm{\nabla \hat{L}(\bw_t)}^2\left(1-\frac{\mu}{2}\eta_{t}\right)~,
	\end{align*}
	which implies that $\hat{L}(\bw_{t+1})\leq \hat{L}(\bw_t)$ whenever $\eta_{t}\leq 2/\mu$, which is indeed assumed. This also implies that $\hat{L}(\bw^*)\leq \hat{L}(\bw_T)\leq \hat{L}(\bw_t)$ for all $t$. 
	
	Next, we argue that $\norm{\bw_t-\bw^*}$ is monotonically decreasing in $t$. To see this, note that by smoothness of $\hat{L}$ and the monotonicity property above, for any $t<T$,
	\begin{align*}
	\hat{L}(\bw^*)-\hat{L}(\bw_t)~&\leq~ \hat{L}(\bw_{T})-\hat{L}(\bw_t) ~\leq~ \hat{L}(\bw_{t+1})-\hat{L}(\bw_t)
	~=~ \hat{L}(\bw_t-\eta_{t}\nabla \hat{L}(\bw_t))-\hat{L}(\bw_t)\\
	&\leq -\eta_{t}\norm{\nabla \hat{L}(\bw_t)}^2+\frac{\mu}{2}\eta_{t}^2\norm{\nabla \hat{L}(\bw_t)}
	~=~ -\eta_{t}\left(1-\frac{\eta_{t} \mu}{2}\right)\norm{\nabla \hat{L}(\bw_t)}^2\\
	&~\leq~
	-\frac{\eta_{t}}{2}\norm{\nabla \hat{L}(\bw_t)}^2~,
	\end{align*}
	where in the last step we used the fact that $\eta_{t} \leq 1/\mu$, hence $1-\frac{\eta_{t} \mu}{2}\leq \frac{1}{2}$. Overall, we get that 
	\[
	\norm{\nabla \hat{L}(\bw_t)}^2~\leq~ \frac{2}{\eta_{t}}(\hat{L}(\bw_t)-\hat{L}(\bw^*)~.
	\]
	Employing this inequality, together with $\hat{L}(\bw_t)-\hat{L}(\bw^*)\leq \nabla \hat{L}(\bw_t)^\top (\bw_t-\bw^*)$ (which follows from convexity of $\hat{L}$), we have the following:
	\begin{align*}
	\norm{\bw_{t+1}-\bw^*}^2-\norm{\bw_t-\bw^*}~&=~
	\norm{\bw_{t}-\eta_{t}\nabla \hat{L}(\bw_t)-\bw^*}^2-\norm{\bw_t-\bw^*}\\
	&=~ -2\eta_{t}\nabla \hat{L}(\bw_t)^\top (\bw_t-\bw^*)+\eta_{t}^2\norm{\nabla \hat{L}(\bw_t)}^2\\
	&\leq~ -2\eta_{t}(\hat{L}(\bw_t)-\hat{L}(\bw^*))+\eta_{t}^2\cdot \frac{2}{\eta_{t}}(\hat{L}(\bw_t)-\hat{L}(\bw^*)~=~0~.
	\end{align*}
	
	To complete the proof, note that since $\norm{\bw_t-\bw^*}$ is monotonically decreasing, then for any $t$, it is at most $\norm{\bw_1-\bw^*}=\norm{\bw^*}$. Therefore, $\norm{\bw_t}\leq \norm{\bw_t-\bw^*}+\norm{\bw^*}\leq 2\norm{\bw^*}$.

\subsection{Proof of \thmref{thm:marginquantilesgd}}

By \lemref{lem:LdecaylogisticGD}, we know that at iteration $T$, $\hat{L}(\bw(T))=\epsilon$ for some 
\[
\epsilon~\leq~ \frac{1}{T}+\frac{\log^2(T)}{2\gamma^2 T}
~\leq~ \frac{1}{\gamma^2 T}\left(1+\frac{1}{2}\log^2(T)\right)~\leq~ \frac{\log^2(T)}{\gamma^2 T}~,
\]
where we used the facts that $\gamma$ must be at most $1$ and $1\leq \frac{1}{2}\log^2(T)$ (since $T>4$). By the theorem assumptions, it follows that $\epsilon < \ell(0)$, so we must have $\bw(T)\neq \mathbf{0}$. Applying \thmref{thm:keygd} (noting that $\ell$ has $\frac{1}{4}$-Lipschitz gradients and that $\ell^{-1}(z):=\log(1/(\exp(z)-1))$, which is between $\log(1/z)$ and $\log(1/2z)$ for all $z\in [0,1]$), we get that for any $p\in [\frac{\epsilon}{\ell(0)},1]$, for at least $(1-p)m$ of the indices $i$,
\[
\bx_i^\top \bar{\bw}_T~>~ \frac{\gamma}{2}\cdot \frac{\log(p/2\epsilon)}{\log(1/\epsilon)}~=~
\frac{\gamma}{2}\cdot \left(1-\frac{\log(2/p)}{\log(1/\epsilon)}\right)
~.
\]
In particular, picking $p=2\epsilon^\alpha$ for some $\alpha \in [0,1]$ (which satisfies $p\geq \frac{\epsilon}{\ell(0)}=\frac{\epsilon}{\exp(2)}$), and noting that $p\leq 2\left(\frac{\log^2(T)}{\gamma^2 T}\right)^{\alpha}$, the result follows.

\subsection{Proof of \lemref{lem:lowboundtech}}

We will utilize the following easily-verified facts about the logistic loss $\ell$: $-1\leq \ell'(z)<0$ for all $z$, and $\ell'(z)<-\frac{1}{4}$ for all $z\leq 1$. Also, by definition of gradient descent, we have the following: $r_1=s_1=0$, and
\begin{align*}
r_{t+1}~&=~r_t-(1-\epsilon)\eta\cdot\ell'(r_t)+\frac{\epsilon\eta}{2}\cdot\ell'\left(-\frac{1}{2}r_t+3\gamma s_t\right)\\
s_{t+1}~&=~s_t-3\gamma \epsilon \eta\cdot\ell'\left(-\frac{1}{2}r_t+3\gamma s_t\right)~.
\end{align*}

The proof relies on the following key lemma:
\begin{lemma}\label{lem:inittime}
	There exists an iteration index $T\leq \frac{32}{5\eta}+1$ for which 
	\begin{enumerate}
		\item $r_t\geq\frac{15}{16}$ for all $t\geq T$
		\item $s_T\leq \frac{3}{10}$
		\item $-\frac{1}{2}r_t+3\gamma s_t\leq 0$ for all $t\leq T$.
	\end{enumerate}
\end{lemma}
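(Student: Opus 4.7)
The plan is to define $T$ as the smallest index with $r_T \geq 15/16$ (which will be shown to exist and be small), and then verify all three conditions at this $T$. The intuition is that the majority term in $\nabla\hat L$ pushes $r$ upward at rate $\Theta(\eta)$, while the minority term only pushes $s$ upward at the much slower rate $\Theta(\gamma\epsilon\eta)$; this disparity keeps the linear functional $-r/2 + 3\gamma s$ non-positive for $\Omega(1/\eta)$ iterations.

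The first step is to obtain per-step estimates on the update $(r_t,s_t)\mapsto (r_{t+1},s_{t+1})$. Using the stated facts $\ell'(z) < -1/4$ for $z\leq 1$ and $-1 \leq \ell'(\cdot) < 0$, together with the explicit gradient formulas, one has, whenever $r_t \leq 1$,
\[
\Delta r_t \;\geq\; (1-\epsilon)\eta\cdot\tfrac14 \;-\; \tfrac{\epsilon\eta}{2} \;=\; \tfrac{(1-3\epsilon)\eta}{4} \;\geq\; \tfrac{5\eta}{32},
\]
using $\epsilon\leq 1/8$, and similarly $\Delta s_t \leq 3\gamma\epsilon\eta$. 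Summing the first inequality from $r_1=0$ shows that $r_t\geq 15/16$ as soon as $(t-1)\cdot 5\eta/32 \geq 15/16$, which gives the bound $T\leq 32/(5\eta)+1$. Substituting into the $s$-bound,
\[
s_T \;\leq\; (T-1)\cdot 3\gamma\epsilon\eta \;\leq\; \tfrac{96\gamma\epsilon}{5} \;\leq\; \tfrac{3}{10},
\]
where the last step uses $\gamma,\epsilon\leq 1/8$; this is property~(2).

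Property~(3) is established by induction on $t\leq T$: the base case $r_1=s_1=0$ is immediate, and for the inductive step the per-step change in $-\tfrac12 r + 3\gamma s$ equals $-\tfrac12\Delta r_t + 3\gamma\Delta s_t$, which the previous bounds control by $-\tfrac{5\eta}{64} + 9\gamma^2\epsilon\eta \leq -\tfrac{5\eta}{64} + \tfrac{9\eta}{512} < 0$, so the functional strictly decreases at every step where $r_t\leq 1$ (in particular for all $t<T$) and thus remains non-positive up to $t=T$. For property~(1) with $t>T$, I would induct on $t$ and split cases: if $r_t\leq 1$, the same lower bound yields $r_{t+1}\geq r_t \geq 15/16$; if $r_t>1$, the worst-case minority pull-down on $r$ is at most $\epsilon\eta/2 \leq 1/16$ (using $\epsilon\leq 1/8,\,\eta\leq 1$), so $r_{t+1} \geq r_t - 1/16 > 15/16$.

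The main obstacle is the tight numerical balancing in property~(3), where the decrease contributed by $-\tfrac12\Delta r_t$ must dominate the increase contributed by $3\gamma\Delta s_t$; this is exactly where the hypotheses $\gamma,\epsilon\leq 1/8$ are essential, since they make $9\gamma^2\epsilon\eta$ dramatically smaller than $5\eta/64$. The rest of the argument is routine summation of per-step bounds, with the caveat that to make the constant $32/(5\eta)+1$ come out with room to spare in all regimes of $\eta\in(0,1]$, one may wish to use the slightly sharper estimate $\ell'(z)\leq -1/(1+e^{15/16})$ for $z\leq 15/16$ in place of the crude $-1/4$ bound.
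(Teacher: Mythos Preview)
Your proposal is correct and follows essentially the same approach as the paper's proof. The only cosmetic difference is that the paper defines $T$ as the first index with $r_T\geq 1$ (the threshold at which the bound $\ell'(r_t)<-\tfrac14$ ceases to be guaranteed), whereas you take the threshold $15/16$ directly from the lemma statement; the per-step estimates, the inductive monotonicity argument for $-\tfrac12 r_t+3\gamma s_t$, the summation for $s_T$, and the two-case induction for property~(1) are all identical to the paper's.
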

\begin{proof}
	We start with the first item. If $r_t\leq 1$, then by the facts mentioned earlier and the assumption on $\epsilon$,
	\[
	r_{t+1}~\geq~ r_t+(1-\epsilon)\eta \cdot \frac{1}{4}-\frac{\epsilon \eta}{2}
	~=~ r_t+\frac{\eta}{4}\left(1-3\epsilon\right)~\geq~
	r_t+\frac{5\eta}{32}~.
	\]
	Since we start at $r_1=0$, we get that as long as $r_t\leq 1$, $r_t$ increases in increments of at least $5\eta/32$. Thus, there is some index $T\leq \frac{32}{5\eta}+1$ at which $r_T\geq 1$ for the first time, and $r_t$ is monotonically increasing for all $t\leq T$. Once $r_t\geq 1$, it cannot decrease to $\frac{15}{16}$ or below in the following iteration, since by the update equation,
	\[
	r_{t+1}~\geq~ r_t+0-\frac{\epsilon\eta}{2} ~\geq~ r_t-\frac{1}{16}~\geq~ \frac{15}{16}~,
	\]
	and if $r_{t+1}\leq 1$, it must monotonically increase again as shown earlier. This establishes the first item in the lemma.
	
	Turning to the second item, we note that $s_1= 0$ and for any $t$, $s_{t+1}\leq s_t+3\gamma \epsilon \eta \leq s_t+3\cdot \frac{1}{8}\cdot \frac{1}{8}\cdot \eta= s_t+\frac{3\eta}{64}$. Since $T\leq \frac{32}{5\eta}+1$, it follows that 
	\[
	s_T~\leq~ \frac{3\eta}{64}(T-1)~\leq~ \frac{3\eta}{64}\cdot\frac{32}{5\eta}~=~ \frac{3}{10}~.
	\]
	
	Turning to the third item in the lemma, define for simplicity $u_t:=-\frac{1}{2}r_t+3\gamma s_t$. Thus, we need to show $u_t\leq 0$ for all $t\leq T$. This trivially holds for $t=1$. For any $t<T$, by the update equations for $r_t,s_t$, 
	\begin{align*}
	u_{t+1}~&=~ u_t+\frac{(1-\epsilon)\eta}{2} \cdot \ell'(r_t)-\left(\frac{1}{4}+9\gamma^2\right)\epsilon\eta\cdot \ell'(u_t)\\
	&\stackrel{(*)}{\leq}~ u_t-\frac{(1-\epsilon)\eta}{8} +\left(\frac{1}{4}+\frac{9}{64}\right)\epsilon \eta~=~
	u_t-\frac{\eta}{8}\left(1-\frac{33}{8}\epsilon\right)~\stackrel{(**)}{<}~ u_t~.
	\end{align*}
	where in $(**)$ we used the assumption $\epsilon \leq \frac{1}{8}$, and in $(*)$ we used the facts that $\gamma\leq \frac{1}{8}$, that $-1\leq\ell'(z)\leq 0$ for any $z$, and that since $t<T$, $r_t< 1$ (see the definition of $T$ above), hence $\ell'(r_t)<-\frac{1}{4}$. Overall, we get that $u_t$ is monotonically decreasing for all $t\leq T$. But we have $u_1=-\frac{1}{2}r_1+3\gamma s_1=0$, hence $u_t$ must be non-positive for all $t\leq T$ as required.
\end{proof}

With this lemma at hand, we turn to prove \lemref{lem:lowboundtech}. According to the lemma, $-\frac{1}{2}r_t+3\gamma s_t> 0$ can only occur for $t>T$ (in which case, $r_t\geq \frac{15}{16}$). This requires that 
\[
s_t ~>~ \frac{1}{6\gamma}r_t ~\geq~ \frac{15}{96\gamma}~.
\]
However, by the lemma, we have $s_T\leq \frac{3}{10}$, and by the update rule for $s_t$, $s_{t+1} \leq s_t+3\gamma \epsilon \eta\leq s_t+3\gamma\epsilon$. Thus, the number of additional iterations (after iteration $T$) required to make $s_t > \frac{15}{96\gamma}$ is at least $\frac{15/96\gamma-3/10}{3\gamma \epsilon}~=~ \frac{5}{96\gamma^2\epsilon}\left(1-\frac{96}{50}\gamma\right)\geq \frac{5}{96\gamma^2 \epsilon}\left(1-\frac{96}{50\cdot 8}\right)= \frac{19}{480\gamma^2 \epsilon}$ as required.

\end{document}